\documentclass[preprint,14pt]{elsarticle}
\usepackage{amssymb}
\usepackage{amsmath}
\usepackage{subfigure}
\usepackage{float}
\usepackage{xcolor}
\usepackage{hyperref} 
\hypersetup{
colorlinks=true,
linkcolor=blue,
anchorcolor=blue,
citecolor=blue
}
\usepackage{booktabs}
\usepackage{comment}
\usepackage{caption}
\usepackage{subcaption} 

\usepackage{amsthm}
\usepackage{commutative-diagrams}
\allowdisplaybreaks[1]

\theoremstyle{definition}
\newtheorem{thm}{Theorem}
\newtheorem{rmk}[thm]{Remark}

\newtheorem{lem}[thm]{Lemma}

\newtheorem{defi}[thm]{Definition}

\newtheorem{Question}[]{Question}

\newcommand{\V}{\mathbb{V}}
\newcommand{\J}{\mathcal{J}}
\newcommand{\I}{\mathcal{I}}
\newcommand{\F}{\mathcal{F}}
\newcommand{\K}{\mathcal{K}}
\newcommand{\IM}{\mathcal{I}_5}

\begin{document}

\begin{frontmatter}



\title{Geometric Stratification for Singular Configurations of the P3P Problem via Local Dual Space} 

 \author[label1]{Xueying Sun}
 \author[label2,label3]{Zijia Li}
 \author[label1,label4]{Nan Li}
 \affiliation[label1]{organization={School of Mathematical Sciences, Shenzhen University},
             country={China}}

\affiliation[label2]{organization={State Key Laboratory of Mathematical Sciences},
             country={China}}

\affiliation[label3]{organization={University of Chinese Academy of Sciences},
             country={China}}

\affiliation[label4]{organization={Guangdong Provincial Key Laboratory of Intelligent Information Processing},
             country={China}}

\begin{abstract}
This paper investigates singular configurations of the P3P problem. Using local dual space, a systematic algebraic-computational framework is proposed to give a complete geometric stratification for the P3P singular configurations with respect to the multiplicity $\mu$ of the camera center $O$: for $\mu\ge 2$, $O$ lies on the ``danger cylinder'', for $\mu\ge 3$, $O$ lies on one of three generatrices of the danger cylinder associated with the first Morley triangle or the circumcircle, and for $\mu\ge 4$, $O$ lies on the circumcircle which indeed corresponds to infinite P3P solutions. Furthermore, a geometric stratification for the complementary configuration $O^\prime$ associated with a singular configuration $O$ is studied as well: for $\mu\ge 2$, $O^\prime$ lies on a deltoidal surface associated with the danger cylinder, and for $\mu\ge 3$, $O^\prime$ lies on one of three cuspidal curves of the deltoidal surface.
\end{abstract}

\begin{graphicalabstract}
\begin{figure}[H]
\centering  
\subfigure[n=4]{
\includegraphics[height= 0.45\linewidth]{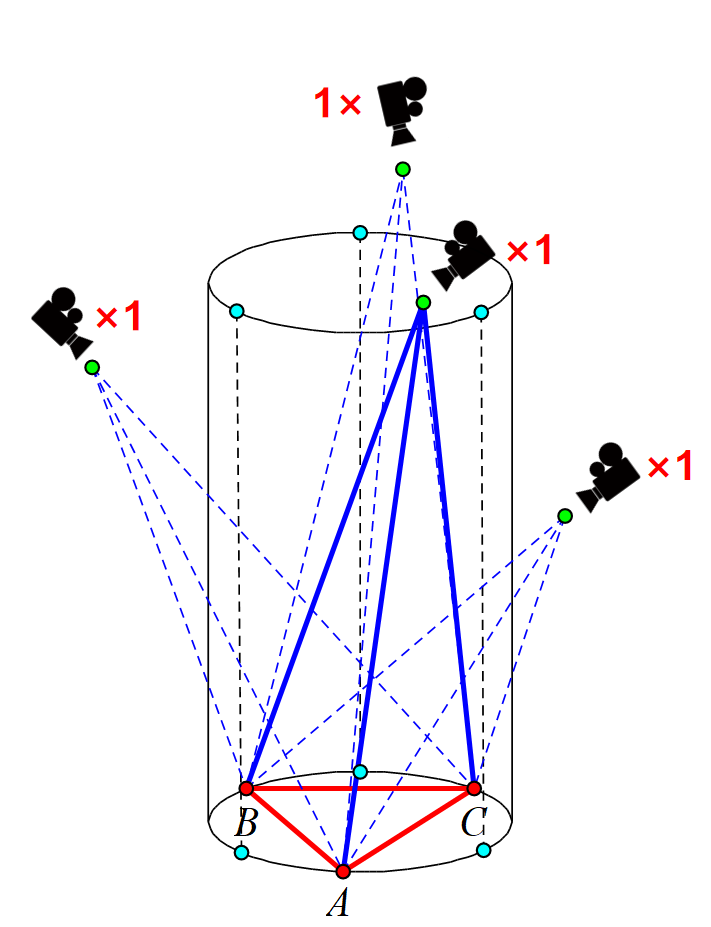}}
\subfigure[n=3]{
\includegraphics[height= 0.45\linewidth]{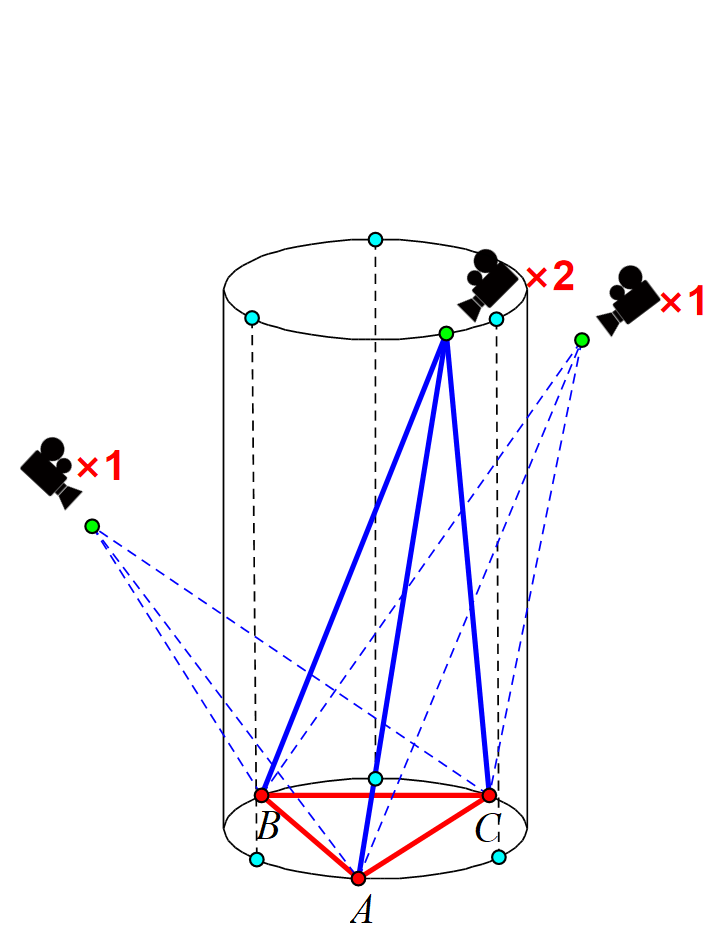}} \\
\subfigure[n=2]{
\includegraphics[height= 0.45\linewidth]{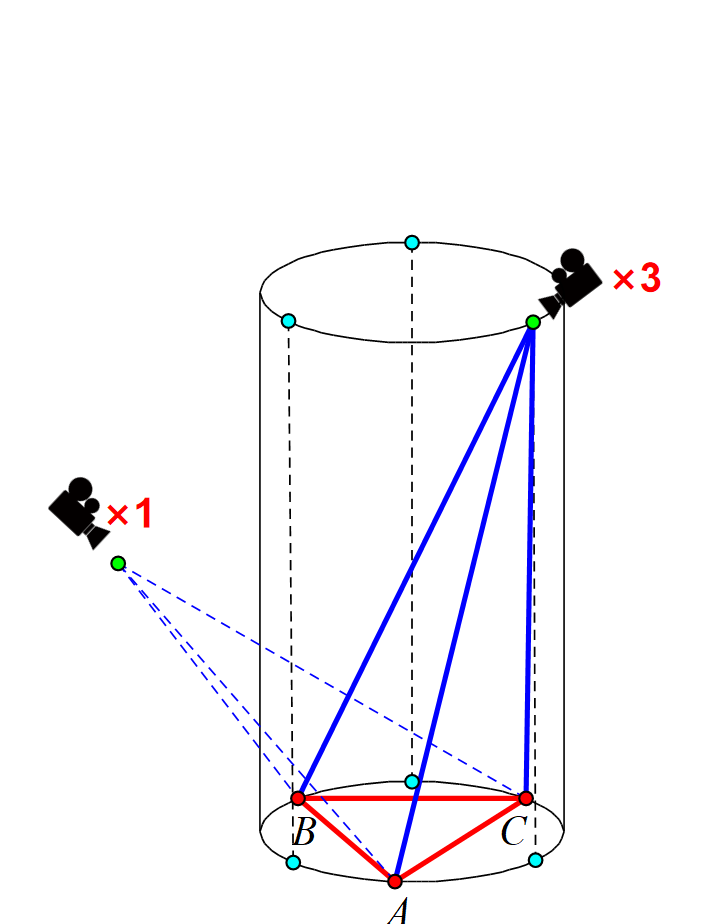}}
\hspace{10pt}
\subfigure[n=$\infty$]{
\includegraphics[height= 0.45\linewidth]{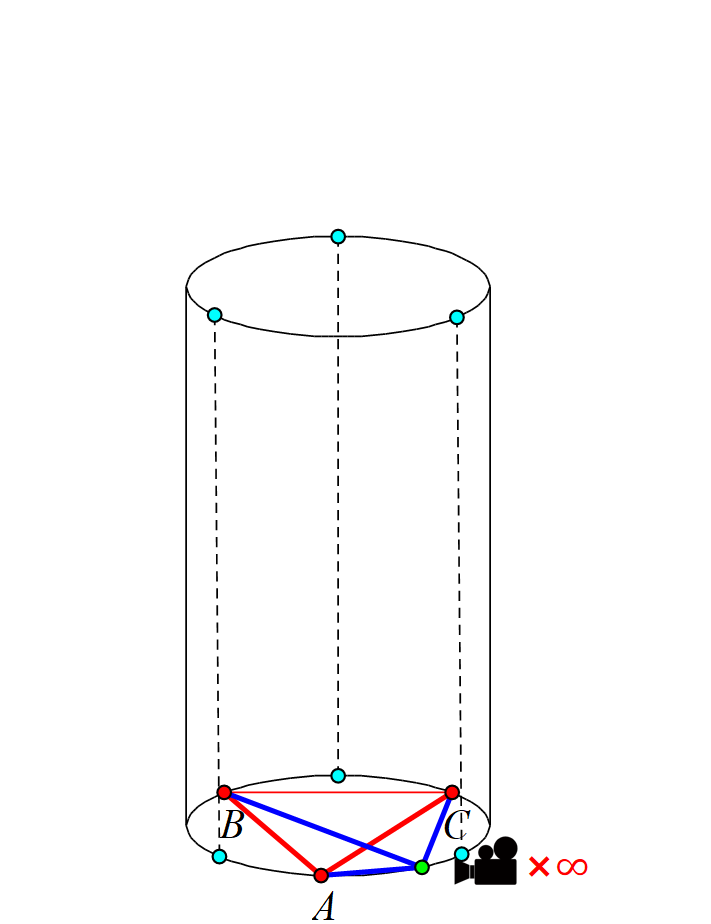}}
\caption{Multi-solution phenomenon and singular configurations of the P3P problem. In different configurations, the number of distinct solutions of the P3P equation system varies (up to $4$ or infinite). Here $n$ denotes the number of distinct solutions and the {\color{red}red} number denotes the multiplicity of the solution. In this paper, we investigate the geometry of singular P3P solutions as well as the geometry of complementary solutions with respect to different multiplicities. }
\end{figure}

\end{graphicalabstract}

\begin{highlights}
\item Using local dual space, a systematic 
stratification framework is proposed for collecting geometric conditions of singular P3P solutions of different multiplicities.
\item By symbolic computation, a complete geometric stratification for the P3P singular solutions w.r.t. multiplicity is given.
\item By case study, a geometric stratification for the complementary solutions associated with  singular P3P solutions w.r.t. multiplicity is given.
\end{highlights}

\begin{keyword}


P3P singularity \sep Danger cylinder \sep Deltoidal surface \sep Local dual space
\end{keyword}

\end{frontmatter}



\section{Introduction}
 The \emph{Perspective-Three-Point (P3P)} problem, to determine the position and orientation of a calibrated camera from three 3D-2D point correspondences, is a classical problem in computer vision with many applications in camera localization, augmented reality, robotics navigation, and 3D reconstruction. It was first introduced by Grunert in 1841 \cite{grunert1841pothenotische} and has been popular since 1981 \cite{fischler1981random}, because P3P served as the minimal problem of PnP in the RANSAC framework (RANdom SAmple Consensus).

The P3P problem can have a varying number of solutions (up to $4$ or infinite) in different configurations between three 3D points and the camera center. When the number of solutions is strictly smaller than $4$ or infinite in certain configurations, i.e., \emph{singular configuration}, the P3P solver usually becomes inaccurate and unstable \cite{10203140,pascual2021complete}. In fact, the study on the multi-solution phenomenon of the P3P problem has been an active topic since its very inception.

From an algebraic perspective, Gao et al. \cite{gao2003complete} used Wu-Ritt's zero decomposition approach to give a complete triangular decomposition for the P3P equation system, which is defined in (\ref{tetra_form}), and provided a complete algebraic classification for (\ref{tetra_form}) to have a different number of solutions. Faugère et al. \cite{faugere2008classification} computed at least one point in each connected component of the real discriminant variety of (\ref{tetra_form}), then provided another algebraic classification. However, these works did not give intuitive descriptions of the geometry of their classifications, especially for singular configurations, which are important to guide the arrangement of 3D points in real applications.

From a geometric perspective, Thompson \cite{thompson1966space} first connected the singular Jacobian determinant of (\ref{tetra_form}) with the \emph{danger cylinder} (see Figure~\ref{mu2}(a)), but its relation to the number of P3P solutions was not provided. Wolfe et al. \cite{wolfe1991perspective} gave a geometrically sufficient condition in which (\ref{tetra_form}) always had four distinct solutions.
Su et al. \cite{su1998necessary} proved that when the camera center lies on the circumcircle of the  3D triangle (see Figure~\ref{fig:mu3}(c)), (\ref{tetra_form}) always has infinite solutions.
 Zhang et al. \cite{zhang2006danger} proved that when the camera center lies on the danger cylinder but far from the circumcircle, the number of P3P solutions will be at most $3$ and one of them is at least a double zero.
Rieck \cite{rieck2024geometric} gave a geometrically necessary condition in which (\ref{tetra_form}) had one real solution. However, most geometric studies on the P3P multi-solution phenomenon are fragmented; a complete collection of geometric conditions for (\ref{tetra_form}) to have a different number of solutions is still lacking.


\emph{Local Dual Space} provides an approach to locally characterize singular solutions by higher-order differential operators \cite{LiZhi:2009,Mantzaflaris2014,MANTZAFLARIS2016114}, which has been successfully applied to certify approximations of isolated singular solutions of polynomial systems \cite{HJLZ2020,LZ:2011,Li2014Verified,Mantzaflaris2011issac,Mantzaflaris2020issac,MANTZAFLARIS2023223}. In this paper, we adopt a computational criterion in \cite{li2022improved} arising from the local dual space to propose a systematic 
 stratification framework for completely collecting geometric conditions with respect to the different multiplicity of singular P3P solutions.
Similar ideas of using higher-order derivatives to locally analyze kinematic singularities are reported in \cite{Mueller2014higher,Mueller2016local}.

\label{sec1}


In this work, we systematically investigate singular configurations of the P3P problem from a geometric perspective. We establish that
\begin{itemize}
    \item singular P3P solutions with multiplicity $\mu\ge 2$ are located on the danger cylinder, solutions with multiplicity $\mu\ge 3$ are located on one of three generatrices of the danger cylinder associated with the first Morley triangle or the circumcircle, and solutions with multiplicity $\mu\ge 4$ are located on the circumcircle whose multiplicity is indeed infinite.
    Namely, it is impossible to have a unique solution over the complex field.
    \item  other P3P solutions complementary to a $\mu\ge 2$ solution are located on a deltoidal surface of order $12$ which is tangent to the danger cylinder along three Morley generatrices and intersects the danger cylinder at the circumcircle, and solutions complementary to a $\mu\ge 3$ solution are located on three cuspidal curves of the deltoidal surface that are its own singular locus. In other words, it is impossible to have two double P3P solutions.
\end{itemize}

The rest of the paper is organized as follows. Section~\ref{sec2} makes necessary recalls on the formulation of the P3P equation system, the number of P3P solutions, the P3P singular configuration, and the local dual space in computational algebraic geometry. Section~\ref{geometric stratification} gives a complete geometric stratification for P3P singular configurations with respect to the multiplicity of the camera center. 
Section~\ref{Geometric Discriminant} uses a case study to demonstrate a complementary stratification for P3P singular configurations.
Section~\ref{conclusion} concludes the work. Some detailed proofs and extended calculations are included in Appendix.

\section{Preliminaries}\label{sec2}
\subsection{P3P problem}
Suppose that three 3D points $\mathbf{X}_i\in \mathbb{R}^3,i\in\{1,2,3\}$ are projected onto three normalized 2D points $\mathbf{x}_i\in\mathbb{P}^2,i\in\{1,2,3\}$ through an unknown calibrated camera centered at $O$ (see Figure~\ref{Fig.1}(a)), then the P3P problem is to find the orientation matrix $\mathbf{R}\in \mathrm{SO}(3)$ and the translation vector $\mathbf{t}\in \mathbb{R}^3$ such that $\mathbf{X}_i\leftrightarrow\mathbf{x}_i$ can be correspondingly related by
\begin{equation}\label{direct_form}
e_i\mathbf{x}_i=\mathbf{R}\mathbf{X}_i+\mathbf{t},~~i\in\{1,2,3\},
\end{equation}
where $e_i=||\mathbf{RX}_i+\mathbf{t}||$ is the distance between $O$ and $\mathbf{X}_i$ in the camera coordinates.
A simple count shows there are 9 equations in (\ref{direct_form}) with $9$ degrees of freedom in the unknowns ($3$ for $e_i$, $3$ for $\mathbf{R}$, $3$ for $\mathbf{t}$), which generally result in a finite number of common zeros.
\begin{figure}[H]
\centering  
\subfigure[]{
\includegraphics[height= 0.4\linewidth]{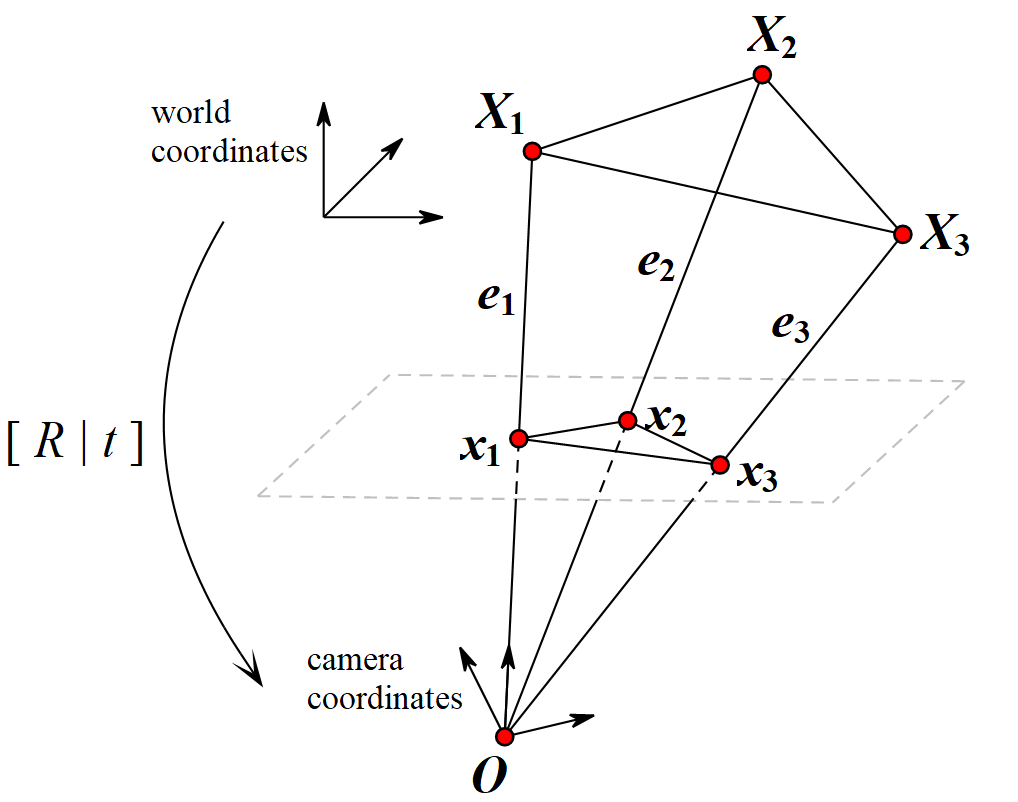}}
\subfigure[]{
\includegraphics[height= 0.4\linewidth]{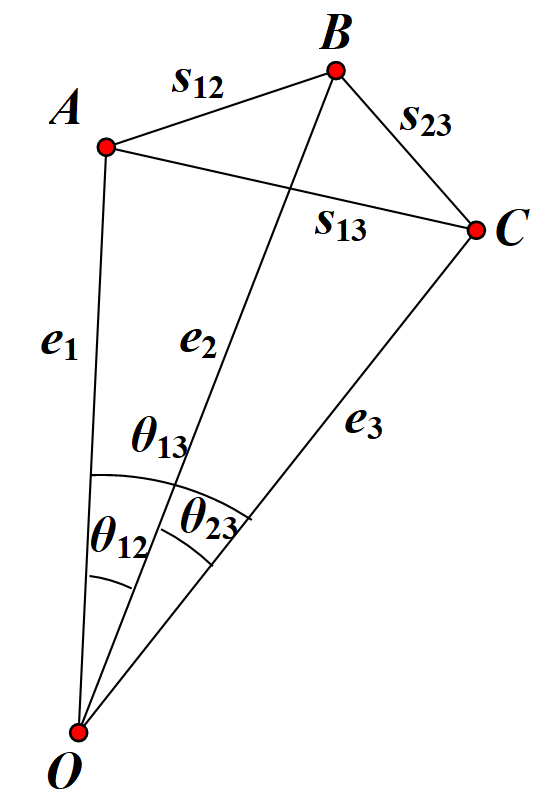}}
\caption{The geometry of the P3P problem.}
\label{Fig.1}
\end{figure}
As shown in Figure~\ref{Fig.1}(b), the system (\ref{direct_form}) can be reformulated into a simpler and more geometrically meaningful system. For $i,j\in\{1,2,3\}$ with $i<j$, define
\begin{itemize}
    \item $s_{ij}=||\mathbf{X}_i-\mathbf{X}_j||$, i.e., the length of the 3D triangle $\triangle ABC$,
    \item $\cos\theta_{ij}=\mathbf{x}_i^{\top}\mathbf{x}_j$, i.e., the angle between edges of the tetrahedron $O\triangle ABC$.
\end{itemize}
By subtracting different pairs of (\ref{direct_form}) and taking their norms, we can derive
\begin{equation}\label{tetra_form}
    e_i^2+e_j^2-2\cos\theta_{ij}\cdot e_ie_j=s_{ij}^2,~~i,j\in\{1,2,3\}~\&~i<j,
\end{equation}
which is actually the Law of Cosines in $\triangle OAB$, $\triangle OBC$ and $\triangle OAC$.

Once (\ref{tetra_form}) is solved, i.e., $(e_1,e_2,e_3)$ is determined, excluding some degenerate configuration (3pt colinearity or 4pt coplanarity), $\mathbf{R}\in \mathrm{SO}(3)$ can be uniquely recovered by
\begin{equation*}
\begin{aligned}
\mathbf{R}&=\mathbf{M}\cdot\left[\mathbf{X}_1-\mathbf{X}_2,\mathbf{X}_3-\mathbf{X}_1,(\mathbf{X}_1-\mathbf{X}_2)\times(\mathbf{X}_3-\mathbf{X}_1)\right]^{-1}, \\
\mathbf{M}&=[e_1\mathbf{x}_1-e_2\mathbf{x}_2,e_3\mathbf{x}_3-e_1\mathbf{x}_1,(e_1\mathbf{x}_1-e_2\mathbf{x}_2)\times(e_3\mathbf{x}_3-e_1\mathbf{x}_1)],
\end{aligned}
\end{equation*}
and $\mathbf{t}$ can be recovered by (\ref{direct_form}) straightforwardly. Hence, the number of solutions for the system (\ref{direct_form}) is the same as the number of solutions for (\ref{tetra_form}).

Clearly, there are exactly $3$ quadratic equations and $3$ unknowns in (\ref{tetra_form}), which generally result in $8$ common zeros (counting multiplicities) by B\'ezout's theorem. Moreover, since all monomials in (\ref{tetra_form}) are of even degree, there are at most $4$ distinct positive zeros ($(e_1,e_2,e_3)$ is a zero implies $-(e_1,e_2,e_3)$ is a zero).

In a general configuration, the system (\ref{tetra_form}) admits four distinct complex solutions (discarding central symmetry). However, the number of distinct solutions will change in certain configurations (singular solutions appear), which usually lead to numerical instability and inaccuracy for P3P solvers.

We are motivated to investigate such configurations.

\subsection{Singular Configuration}

For the P3P problem, the camera center $O$ is called a \emph{singular configuration} with respect to a 3D triangle $\triangle ABC$ if  $(|OA|,|OB|,|OC|)$ is a singular solution of (\ref{tetra_form}) with multiplicity greater than two.

Clearly, (\ref{tetra_form}) is a polynomial system by taking $\cos\theta_{ij}$ as constants, so we define isolated singular solutions of polynomial systems and their multiplicity.

\begin{defi}[\cite{li2022improved}]\label{mul}
Given a system $f \in \mathbb{C}[X]^n$ where $X=\{X_1, \ldots, X_n\}$, then $\xi \in \mathbb{C}^n$ is an isolated solution of $f$ with multiplicity $\mu$ if it satisfies that

\vspace{-15pt}

\begin{item}
\item[(1)] $\xi \in f^{-1}(\mathbf{0}):=\{x \in \mathbb{C}^n ~|~ f(x)=\mathbf{0}\}$,
\item[(2)] $\exists B(\xi, r):=\{x \in \mathbb{C}^n ~|~ \|x-\xi\|\leq r\}$ ($r>0$) such that $B(\xi, r) \cap f^{-1}(\mathbf{0}) = \{\xi\}$,
\item[(3)] $|B(\xi, r) \cap g^{-1}(\mathbf{0})| = \mu$ for $\forall g \in \mathbb{C}[X]^n$ that is sufficiently close to $f$.
\end{item}
\end{defi}

Now, we propose two crucial questions concerning singular configurations of the P3P problem.
Given a 3D triangle $\triangle ABC$, i.e., $(s_{12}, s_{13}, s_{23})$ is fixed, suppose $(e_1,e_2,e_3)$ is a singular solution of (\ref{tetra_form}) with multiplicity $\geq\mu$ $(\mu=2,3,4)$, we want to know


\begin{Question}\label{Q1}
    where is the location of the corresponding center $O$?
\end{Question}

\begin{Question}\label{Q2}
    where are the other centers complementary to $O$?
\end{Question}


Note that Gao et al. \cite{gao2003complete} provided a complete algebraic classification for (\ref{tetra_form}) to have a different number of solutions, i.e., collecting algebraic conditions on $s_{ij}$ and $\cos\theta_{ij}$. In contrast, this work aims to provide a complete geometric stratification for (\ref{tetra_form}) to have singular solutions with different multiplicity, i.e., collecting geometric conditions on $O$ with respect to $\triangle ABC$.
More precisely, for $\mu=2,3,4$, let $\mathcal{V}_{\geq \mu}$ denote the set of singular solutions of (\ref{tetra_form}) with multiplicity $\geq\mu$, we want to give a complete representation
\begin{equation}\label{stra}
\mathcal{V}_{\geq 2}\supseteq\mathcal{V}_{\geq 3}\supseteq\mathcal{V}_{\geq 4},
\end{equation}
which is called \emph{the geometric stratification for singular configurations} w.r.t. multiplicities.
Furthermore, suppose that $O\in\mathcal{V}_{\geq \mu}$ and $O^\prime$ is a complementary solution of (\ref{tetra_form}) associated with $O$, and let $\mathcal{V}^{\prime}_{\geq \mu}$ denote the set of complementary solutions with multiplicity $\geq\mu$, we want to give a complete representation
\begin{equation}\label{stra_star}
\mathcal{V}^{\prime}_{\geq 2}\supseteq\mathcal{V}^{\prime}_{\geq 3}\supseteq\mathcal{V}^{\prime}_{\geq 4},
\end{equation}
 which is called \emph{the complementary stratification for singular configurations} w.r.t. multiplicities.

 This work focuses on answering Questions~\ref{Q1} and~\ref{Q2} in the form of (\ref{stra}) and (\ref{stra_star}) respectively.

 \begin{rmk}
     Our geometric stratification represents an application of the general stratification theory, e.g., the canonical Whitney stratification \cite{henry1983limites, teissier1982multiplicites},  to a new context: instead of studying the singularities of a space, we stratify a smooth parameter space  (e.g., (\ref{tetra_form})) according to the solution behavior of a geometric system. This extends the utility of stratification beyond its traditional domain of singularities, making it a tool for parametric problems in computational geometry and algebraic kinematics.
 \end{rmk}

\subsection{Local dual space}

To formulate the geometric stratification within a computationally algebraic framework, we introduce the \emph{local dual space} of an ideal at its zero.

\begin{defi}[\cite{li2022improved}]
Given $f \in \mathbb{C}[X]^n$ and $\xi \in f^{-1}(\mathbf{0})$, the differential operator space defined by
\[\mathcal{D}_{f,\xi}=\{\Lambda\in\mathbb{C}[X]^{*}~|~\Lambda[g](\xi)=0,\forall g\in\langle f \rangle\},\]
is called the \emph{local dual space} of $\langle f \rangle$ at $\xi$.
Moreover, if $\xi$ is an isolated singular solution of $f$ with multiplicity $\mu$, then $\mu=\dim \mathcal{D}_{f,\xi}$.
\end{defi}

Note that existing geometric studies \cite{zhang2006danger,rieck2024geometric} mostly use special Cartesian coordinate systems to eliminate (\ref{tetra_form}) into a univariate equation, then use the discriminant to obtain certain conditions. In contrast, this work adopts the local dual space to deal  with singular solutions of (\ref{tetra_form}) in their multivariate nature, which leads to a systematic 
 stratification framework.

Let $k\in\mathbb{N}$, suppose that $\dim \mathcal{D}_{f,\xi}\geq k$ and the Jacobian $J_f(\xi)$ is of corank one, then a computational criterion can be found in \cite{li2022improved}: if a deterministic $\Delta_{k}\in\mathbb{C}[X]^{*}$ of degree $k$ satisfies $u^H\Delta_{k}[f](\xi)=0$, where $u\in \mathbb{C}^n$ is the null vector of $J_f(\xi)^H$, then $\dim \mathcal{D}_{f,\xi}\geq k+1$ (see \cite[Theorem 1]{li2022improved}). In fact, for the P3P system (\ref{tetra_form}), $\forall\xi\in\mathcal{V}_{\geq \mu}$ satisfies $\mathrm{corank}~J_f(\xi)=1$ (see Lemma~\ref{breadthone}), so let $\mathcal{J}_k=\langle u^H\Delta_k[f] \rangle$, then
\begin{equation}
    \mathcal{V}_{\geq \mu} \subseteq\mathbb{V}\left(\sum_{k=1}^{\mu-1}\mathcal{J}_k\right), ~~\mu=2,3,4,
    \nonumber
\end{equation}
where the difference between two sets is due to some underlying degeneracy. Therefore, the geometric stratification (\ref{stra}) can be obtained by decomposing $\sqrt{\sum \mathcal{J}_k}$ into irreducible components and discarding degenerate ones.
The derivations in the next section are based on the framework from Figure~\ref{research framework}.
\begin{figure}[H]
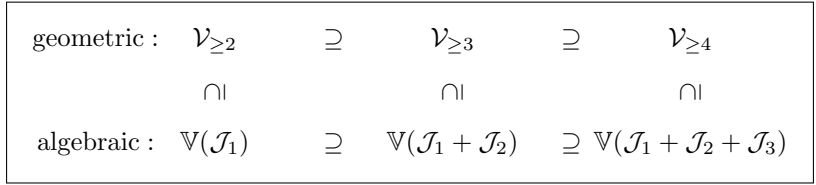

\centering
\[
\boxed{
    \begin{codi}
    \obj[row sep=0.4em]{
     \mathrm{geometric:} & \mathcal{V}_{\geq2} & \supseteq & \mathcal{V}_{\geq3} & \supseteq&\mathcal{V}_{\geq4} \\
     & \rotatebox{90}{$\supseteq$} & & \rotatebox{90}{$\supseteq$} & & \rotatebox{90}{$\supseteq$}\\
    \mathrm{algebraic:} & \V(\J_1)&\supseteq & \V(\J_1+\J_2) &\supseteq & \V(\J_1+\J_2+\J_3) \\
    };
    \end{codi}
}
\]
\caption{Stratification framework I.}
\label{research framework}
\end{figure}

\begin{rmk} 
Note that in a concrete computational setting, the input consists of polynomials in variables—such as $e_1, e_2, e_3$—with rational coefficients, and all algebraic computations are performed within polynomial rings over $\mathbb{Q}$.
\end{rmk}

\section{Geometric Stratification for Singular Configurations}\label{geometric stratification}

Given a 3D triangle $\triangle ABC$, i.e., $(s_{12}, s_{13}, s_{23})$ is fixed, the Jacobian matrix of (\ref{tetra_form}) is computed as

\begin{equation}
    \mathbf{J}=\begin{bmatrix}
  0& \frac{s_{23}^2+e_2^2-e_3^2}{e_2}  & \frac{s_{23}^2-e_2^2+e_3^2}{e_3} \vspace{5pt} \\
  \frac{s_{13}^2+e_1^2-e_3^2}{e_1}& 0 & \frac{s_{13}^2-e_1^2+e_3^2}{e_3} \vspace{5pt} \\
  \frac{s_{12}^2+e_1^2-e_2^2}{e_1}&\frac{s_{12}^2-e_1^2+e_2^2}{e_2}  &0
\end{bmatrix},
\nonumber
\end{equation}
where $\cos\theta_{ij}$ are eliminated by the Law of Cosines.

If $(e_1, e_2, e_3)$ is a singular solution of (\ref{tetra_form}), then it is easy to get $\mathrm{rank}~\mathbf{J}\leq 2$. We show in the following lemma that $\mathrm{corank}~\mathbf{J}=1$ exactly.
\begin{lem}\label{breadthone}
Suppose $\triangle ABC$ is a valid triangle, then  $\mathrm{rank}~\mathbf{J}\geq 2$.
\end{lem}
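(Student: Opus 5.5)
The plan is to use the zero-diagonal pattern of $\mathbf{J}$ together with one elementary fact: two off-diagonal entries in the same row cannot vanish simultaneously. This reduces the claim to a short combinatorial argument about rank-one matrices, with no case analysis on the geometry of $O$ beyond the nondegeneracy assumptions.

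Throughout I assume $e_1e_2e_3\neq 0$, so that $\mathbf{J}$ is defined. This means $O$ does not coincide with a vertex. I also use that $\triangle ABC$ is a valid triangle, so $s_{12},s_{13},s_{23}\neq 0$. Write
\[
\mathbf{J}=\begin{bmatrix}0&a&b\\ c&0&d\\ g&h&0\end{bmatrix},
\]
with the entries as displayed above.

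The first step looks at row $1$. If $a=b=0$, then $s_{23}^2+e_2^2-e_3^2=0$ and $s_{23}^2-e_2^2+e_3^2=0$. Adding these gives $2s_{23}^2=0$, which contradicts the validity of the triangle. The same computation with $s_{13}$ for row $2$ and with $s_{12}$ for row $3$ shows that every row of $\mathbf{J}$ contains at least one nonzero entry, and this entry lies off the diagonal. Geometrically, a single vanishing entry only says that one angle of a face triangle such as $\triangle OBC$ is a right angle. Two vanishing entries in one row would force two right angles in the same triangle, which is impossible.

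The second step is the rank argument. Suppose, for contradiction, that $\mathrm{rank}~\mathbf{J}\le 1$. Since every row is nonzero by the first step, all three rows are nonzero scalar multiples of a single nonzero vector $v$. Hence all rows have the same support, namely $\mathrm{supp}(v)$. Row $i$ has a zero in column $i$, so column $i$ is not in $\mathrm{supp}(v)$, and this holds for $i=1,2,3$. Therefore $\mathrm{supp}(v)=\emptyset$, so $v=0$, which is a contradiction. Hence $\mathrm{rank}~\mathbf{J}\ge 2$.

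I expect the only real obstacle to be making the nondegeneracy hypotheses explicit, since the linear algebra itself is immediate. Specifically, I need $e_i\neq 0$ so that the rational form of $\mathbf{J}$, obtained after eliminating $\cos\theta_{ij}$, is legitimate, and $s_{ij}\neq 0$ for the row argument. Both follow from excluding the degenerate configurations mentioned in Section~\ref{sec2}. Nothing else is needed, so the argument works verbatim over $\mathbb{C}$ as well. Consequently, at any singular solution we have $\mathrm{corank}~\mathbf{J}=1$ exactly, as required for the criterion of \cite{li2022improved}.
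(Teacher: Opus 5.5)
Your proof is correct and follows essentially the paper's route. The key step is the same: the two off-diagonal numerators in each row sum to $2s_{ij}^2\neq 0$. Your support argument for rank-one matrices is only a repackaging of the paper's finish, which assumes without loss of generality that $\mathbf{J}_{12}\neq 0$ and then observes that one of the minors $-\mathbf{J}_{12}\mathbf{J}_{21}$ and $\mathbf{J}_{12}\mathbf{J}_{23}$ must be nonzero. You also state explicitly the hypothesis $e_1e_2e_3\neq 0$ and the reason the numerators cannot both vanish, both of which the paper leaves implicit.
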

\begin{proof}
    Observe that each row of $\mathbf{J}$ has two off-diagonal entries with numerators
    \begin{equation}
        s_{ij}^2+e_i^2-e_j^2~~\text{and}~~s_{ij}^2-e_i^2+e_j^2.
        \nonumber
    \end{equation}
    Clearly, at least one of them is nonzero as long as $s_{ij}\neq0$.

    Without loss of generality, assume $s_{23}^2+e_2^2-e_3^2\neq0$, then at least one of
    \begin{equation}
        \left|\begin{array}{cc}
  0& \frac{s_{23}^2+e_2^2-e_3^2}{e_2}  \vspace{5pt} \\
  \frac{s_{13}^2+e_1^2-e_3^2}{e_1}& 0
\end{array}\right| ~~\text{and}~~
\left|\begin{array}{cc}
  \frac{s_{23}^2+e_2^2-e_3^2}{e_2}  & \frac{s_{23}^2-e_2^2+e_3^2}{e_3} \vspace{5pt} \\
   0 & \frac{s_{13}^2-e_1^2+e_3^2}{e_3},
\end{array}\right|
\nonumber
    \end{equation}
    is nonzero, which concludes the proof.
\end{proof}

Assuming that $\mathbf{J}_{23},\mathbf{J}_{32}$ are not zero, the left null vector of $\mathbf{J}$ is computed as
\begin{equation}
    u=\begin{bmatrix}
 1 \vspace{5pt}\\
 \frac{-e_2^2 + e_3^2 + s_{23}^2}{e_1^2 - e_3^2 - s_{13}^2} \vspace{5pt} \\
\frac{e_2^2 - e_3^2 + s_{23}^2}{e_1^2 - e_2^2 - s_{12}^2}
\end{bmatrix}.
\nonumber
\end{equation}

Now, we are ready to proceed with the 
 framework in Figure~\ref{research framework}. Note that all computation results are illustrated in the attached Maple worksheet.

\subsection{\texorpdfstring{$\mathcal{V}_{\geq2}\subseteq\V(\J_1)$}{}}
\subsubsection{\texorpdfstring{Decomposing $\V(\J_1)$}{}}
Suppose $\dim \mathcal{D}_{f,\xi}\geq 1$, then the first order criterion is computed as
\begin{equation}
    \begin{aligned}
     u^H\Delta_1[f]=&2e_2e_3( e_{1}^{4} s_{23}^{2}+e_{1}^{2} e_{2}^{2} s_{12}^{2}-e_{1}^{2} e_{2}^{2} s_{13}^{2}-e_{1}^{2} e_{2}^{2} s_{23}^{2}-e_{1}^{2} e_{3}^{2} s_{12}^{2} \\
     & +e_{1}^{2} e_{3}^{2} s_{13}^{2}-e_{1}^{2} e_{3}^{2} s_{23}^{2}+e_{2}^{4} s_{13}^{2}-e_{2}^{2} e_{3}^{2} s_{12}^{2}-e_{2}^{2} e_{3}^{2} s_{13}^{2} \\
     & +e_{2}^{2} e_{3}^{2} s_{23}^{2}+e_{3}^{4} s_{12}^{2}-s_{12}^{2} s_{13}^{2} s_{23}^{2} ).
\end{aligned}
\nonumber
\end{equation}
By omitting $2e_2e_3$, we derive
\begin{equation}
    \begin{aligned}
     \J_1=&\left \langle u^H\Delta_1[f]/2e_2e_3 \right \rangle \\
     =&\left \langle e_{1}^{4} s_{23}^{2}+e_{1}^{2} e_{2}^{2} s_{12}^{2}-e_{1}^{2} e_{2}^{2} s_{13}^{2}-e_{1}^{2} e_{2}^{2} s_{23}^{2}-e_{1}^{2} e_{3}^{2} s_{12}^{2} \right.\\
     & \left.+e_{1}^{2} e_{3}^{2} s_{13}^{2}-e_{1}^{2} e_{3}^{2} s_{23}^{2}+e_{2}^{4} s_{13}^{2}-e_{2}^{2} e_{3}^{2} s_{12}^{2}-e_{2}^{2} e_{3}^{2} s_{13}^{2}\right. \\
     & \left.+e_{2}^{2} e_{3}^{2} s_{23}^{2}+e_{3}^{4} s_{12}^{2}-s_{12}^{2} s_{13}^{2} s_{23}^{2} \right \rangle,
\end{aligned}
\nonumber
\end{equation}
which is a prime ideal.

Consequently, $\mathcal{V}_{\geq2}=\V(\J_1)$.

\subsubsection{\texorpdfstring{Interpreting $\mathcal{V}_{\geq2}$}{}}
To clarify the geometric meaning of $\V(\J_1)$, consider placing $\triangle ABC$ in the following coordinate system
\begin{equation}\label{coordinate}
    A=(0,\ 0,\ 0), \quad B=(x_2,\ 0,\ 0), \quad C=(x_3,\ y_3,\ 0),
\end{equation}
where $x_2=s_{12},x_3=s_{13}\cos{\angle A},y_3=s_{13}\sin{\angle A}$. Let $P=(x,y,0)$ be the projection of the camera center $O=(x,y,z)$ onto the $\triangle ABC$ plane (see Figure~\ref{mu2}(b)), then $e_1,e_2,e_3$ can be computed as
\begin{equation}
    \begin{aligned} \label{edge}
    &e_1=\sqrt{x^2+y^2+z^2},\\
    &e_2=\sqrt{(x-x_2)^2+y^2+z^2},\\
    &e_3=\sqrt{(x-x_3)^2+(y-y_3)^2+z^2}.
\end{aligned}
\end{equation}

Solving $x,y$ from (\ref{edge}) then substituting them into the equation of the circumcircle of $\triangle ABC$ as
\begin{equation}
    x^2+y^2-x_2x+\frac{x_2x_3-x_3^2-y_3^2}{y_3}y=0,
    \nonumber
\end{equation}
we derive
\begin{equation}\label{cylinder}
    \begin{aligned}
    & e_{1}^ {4} s_{23}^{2}+e_{1}^{2} e_{2}^{2} s_{12}^{2}-e_{1}^{2} e_{2}^{2} s_{13}^{2}-e_{1}^{2} e_{2}^{2} s_{23}^{2}-e_{1}^{2} e_{3}^{2} s_{12}^{2}\\
     & +e_{1}^{2} e_{3}^{2} s_{13}^{2}-e_{1}^{2} e_{3}^{2} s_{23}^{2}+e_{2}^{4} s_{13}^{2}-e_{2}^{2} e_{3}^{2} s_{12}^{2}-e_{2}^{2} e_{3}^{2} s_{13}^{2}\\
     & +e_{2}^{2} e_{3}^{2} s_{23}^{2}+e_{3}^{4} s_{12}^{2}-s_{12}^{2} s_{13}^{2} s_{23}^{2}=0,
\end{aligned}
\end{equation}
where a factor of $(s_{23}-s_{13}+s_{12})(s_{23}-s_{13}-s_{12})(s_{23}+s_{13}+s_{12})(s_{23}+s_{13}-s_{12})$ is omitted. Clearly $\J_1$ is the same as (\ref{cylinder}), which defines a right cylinder with the circumcircle of $\triangle ABC$ as its base, i.e., the ``danger cylinder'' (see Figure~\ref{mu2}(a)).

\begin{thm}\label{thm1}
     $(e_1,e_2,e_3)$ is an isolated solution of (\ref{tetra_form}) with multiplicity $\mu \ge 2$, {\emph{if and only if}} the camera center $O$ lies on the danger cylinder defined by $\triangle ABC$.
 \end{thm}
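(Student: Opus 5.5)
The plan is to chain two equivalences. The first is algebraic: $\mu\ge 2$ holds iff $\det\mathbf{J}=0$ at the solution, iff the generator of $\J_1$ vanishes. The second is geometric: that generator vanishes iff $O$ lies on the danger cylinder.

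\textbf{Algebraic step.} An isolated solution $\xi$ of (\ref{tetra_form}) has multiplicity $\mu\ge2$ exactly when $\mathcal{D}_{f,\xi}$ contains a functional beyond evaluation at $\xi$. For a square system this is equivalent to $\det J_f(\xi)=0$. By Lemma~\ref{breadthone}, the corank is then exactly one, so the left null vector $u$ is well defined up to scaling. The first-order criterion of \cite{li2022improved} is $u^H\Delta_1[f](\xi)=0$, and up to the nonvanishing factor $2e_2e_3$ this is the vanishing of the generator of $\J_1$.

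To make this concrete without quoting the criterion, I would compute $\det\mathbf{J}$ directly. Writing $\mathbf{J}$ with the $\cos\theta_{ij}$ still present, $\det\mathbf{J}$ is a cubic in the $e_i$. Eliminating the $\cos\theta_{ij}$ by the Law of Cosines and clearing the denominators $e_1e_2e_3$ should give the polynomial in (\ref{cylinder}) times monomial factors. Since $e_i>0$ for a genuine configuration, the vanishing of $\det\mathbf{J}$ is then equivalent to (\ref{cylinder}). Two degenerate points need care. If $\mathbf{J}_{23}$ or $\mathbf{J}_{32}$ vanishes, the formula for $u$ changes, but the direct determinant computation covers that case. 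The isolatedness hypothesis in the statement rules out the circumcircle situation of infinitely many solutions.

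\textbf{Geometric step.} Place the triangle as in (\ref{coordinate}) with $O=(x,y,z)$, so that (\ref{edge}) holds. The differences $e_1^2-e_2^2$ and $e_1^2-e_3^2$ are linear in $x,y$, so $x,y$ are rational (indeed affine) in $e_1^2,e_2^2,e_3^2$, and $z^2=e_1^2-x^2-y^2$. Substituting into the circumcircle equation
\[
x^2+y^2-x_2x+\tfrac{x_2x_3-x_3^2-y_3^2}{y_3}\,y=0
\]
turns it into (\ref{cylinder}) times the Heron-type factor $(s_{23}-s_{13}+s_{12})(s_{23}-s_{13}-s_{12})(s_{23}+s_{13}+s_{12})(s_{23}+s_{13}-s_{12})$. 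For a valid (nondegenerate) triangle this factor is nonzero.

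The circumcircle equation involves only $x,y$. Hence it is satisfied iff the projection $P$ of $O$ lies on the circumcircle, i.e., iff $O$ lies on the right cylinder over the circumcircle, which is the danger cylinder. Conversely, any point $O$ gives $e_i$ via (\ref{edge}), so both directions hold.

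\textbf{Main obstacle.} The hard part is the symbolic verification that $\det\mathbf{J}$, equivalently $u^H\Delta_1[f]$, equals (\ref{cylinder}) up to harmless factors, and that no spurious factors vanish in valid configurations. I would handle this by direct expansion in a computer algebra system, checking that the extra factors are only powers of $e_i$ and the Heron-type product, both nonzero for a real nondegenerate tetrahedron.
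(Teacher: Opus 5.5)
Your proposal is correct and follows the paper's proof. The first-order local-dual criterion cuts out $\V(\J_1)$, and substituting (\ref{edge}) into the circumcircle equation identifies $\V(\J_1)$ with the danger cylinder, up to the Heron factor, which is nonzero for a valid triangle.

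Your direct determinant check is a small improvement. Indeed $e_1e_2e_3\det\mathbf{J}$ equals $-2$ times the left-hand side of (\ref{cylinder}), so you avoid the paper's assumption $\mathbf{J}_{23},\mathbf{J}_{32}\neq 0$, which it needs to write down $u$. As in the paper, though, the converse direction tacitly requires excluding the circumcircle, where the solution is not isolated.
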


 \begin{figure}[H]
\centering  
\subfigure[]{
\includegraphics[height= 0.33\linewidth]{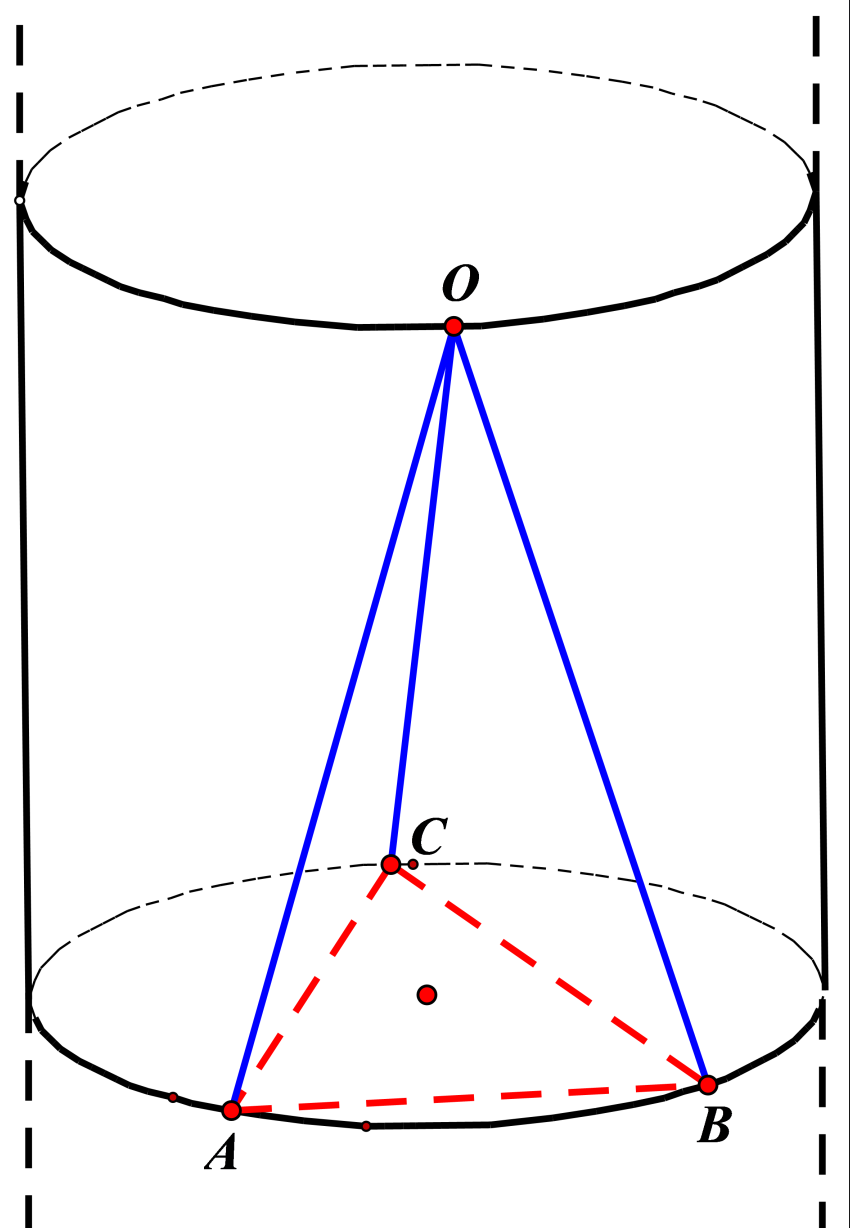}}
\subfigure[]{
\includegraphics[height= 0.33\linewidth]{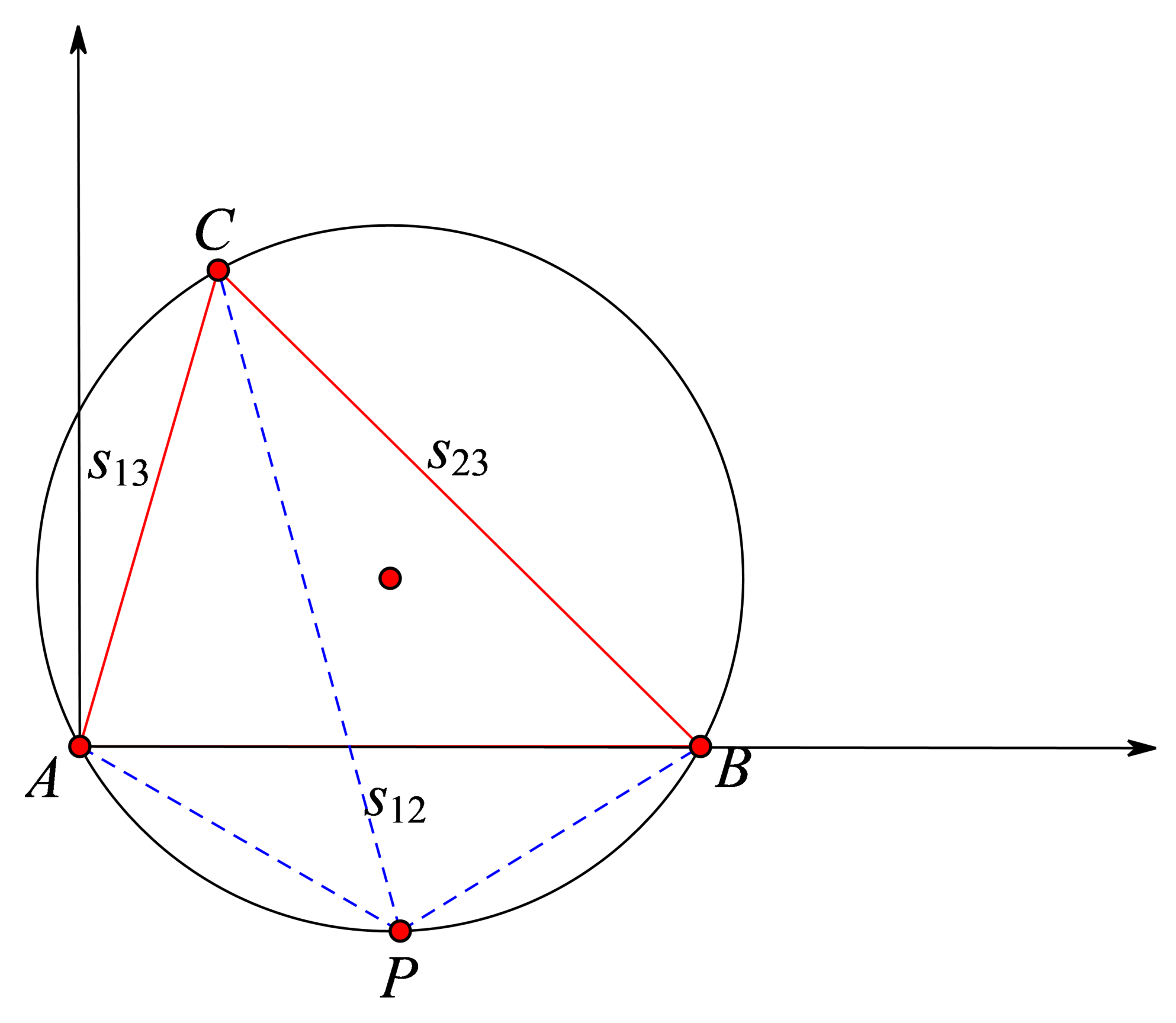}}
\caption{The geometry of $\mathcal{V}_{\geq2}$.}
\label{mu2}
\end{figure}

\subsection{\texorpdfstring{$\mathcal{V}_{\geq3}\subseteq\V(\J_1+\J_2)$}{}}
\subsubsection{\texorpdfstring{Decomposing $\V(\J_1+\J_2)$}{}}
\label{sec:I12}
Suppose $\dim \mathcal{D}_{f,\xi}\geq 2$, then $\J_2$ is computed as
\begin{equation}
    \begin{aligned}
        \J_2=&\left \langle u^H\Delta_2[f]/2e_2^2e_3^2\right \rangle \\
        =&\left \langle e_1^{12}s_{23}^2+e_1^{10}e_2^2s_{12}^2-e_1^{10}e_2^2s_{13}^2-3e_1^{10}e_2^2s_{23}^2-e_1^{10}e_3^2s_{12}^2 \right.\\
     &\left.+e_1^{10}e_3^2s_{13}^2+\cdots\text{219 terms}\cdots+e_3^8s_{12}^6 -2e_3^6s_{12}^6s_{13}^2\right. \\
     &\left.-7e_3^4s_{12}^6s_{13}^4+3e_3^4s_{12}^6s_{13}^2s_{23}^2+2e_3^2s_{12}^6s_{13}^4s_{23}^2+3s_{12}^6s_{13}^6s_{23}^2  \right \rangle.
    \end{aligned}
    \nonumber
\end{equation}

Since directly decomposing $\sqrt{\J_1+\J_2}$ is computationally expensive, we
 employ a specialized algebraic approach
 to derive the result. We provide the decomposition result below, whose rigorous proof can be found in~\ref{app I1}.

Introduce four ideals as follows.
\begin{itemize}
\item $\I_1$: geometrically corresponds to three special generatrices of the danger cylinder (which will be explained later) as
\begin{equation}
        \I_1=\J_1 + \mathcal{G},
        \nonumber
    \end{equation}
    where
   \begin{equation}
    \begin{aligned}
    \mathcal{G}=
    &\left \langle -e_1^{4} s_{12}^{2}+e_1^{4} s_{13}^{2}-2 e_1^{2} e_2^{2} s_{13}^{2}+2 e_1^{2} e_3^{2} s_{12}^{2}+e_2^{4} s_{13}^{2}-e_2^{2} s_{12}^{2} s_{13}^{2}-e_3^{4} s_{12}^{2}+e_3^{2} s_{12}^{2} s_{13}^{2},\right.\\
    &\left. e_1^{4} s_{23}^{2}-2 e_1^{2} e_2^{2} s_{23}^{2}-e_1^{2} s_{12}^{2} s_{23}^{2}-e_2^{4} s_{12}^{2}+e_2^{4} s_{23}^{2}+2 e_2^{2} s_{12}^{2} e_3^{2}-e_3^{4} s_{12}^{2}+e_3^{2} s_{12}^{2} s_{23}^{2},\right.\\
    &\left. e_1^{4} s_{23}^{2}-2 e_1^{2} e_3^{2} s_{23}^{2}-s_{13}^{2} e_1^{2} s_{23}^{2}-e_2^{4} s_{13}^{2}+2 e_2^{2} e_3^{2} s_{13}^{2}+e_2^{2} s_{13}^{2} s_{23}^{2}-e_3^{4} s_{13}^{2}+e_3^{4} s_{23}^{2}\right \rangle.
\end{aligned}
\nonumber
\end{equation}
    \item $\I_2$: geometrically corresponds to the circumcircle of $\triangle ABC$ as
    \begin{equation}
        \I_2=\J_1 + \mathcal{H},
        \nonumber
    \end{equation}
    where
    \begin{equation}
    \begin{aligned}
     \mathcal{H}=&\left \langle-e_1^{4}s_{23}^{2}-e_1^{2}e_2^{2}s_{12}^{2}+e_1^{2}e_2^{2}s_{13}^{2}+e_1^{2}e_2^{2}s_{23}^{2}+e_1^{2} e_3^{2} s_{12}^{2}-e_{1}^{2} e_{3}^{2} s_{13}^{2}\right.\\
    &\left.+e_{1}^{2}e_{3}^{2} s_{23}^{2}+e_{1}^{2}s_{12}^{2} s_{23}^{2}+e_{1}^{2} s_{13}^{2} s_{23}^{2}-e_{1}^{2}s_{23}^{4}-e_{2}^{4} s_{13}^{2}+e_{2}^{2} e_{3}^{2} s_{12}^{2}\right.\\
    &\left.+e_{2}^{2} e_{3}^{2} s_{13}^{2}-e_{2}^{2} e_{3}^{2} s_{23}^{2}+e_{2}^{2} s_{12}^{2} s_{13}^{2}-e_{2}^{2} s_{13}^{4}+e_{2}^{2} s_{13}^{2} s_{23}^{2}-e_{3}^{4} s_{12}^{2} \right.\\
    &\left.-e_{3}^{2}s_{12}^{4}+e_{3}^{2} s_{12}^{2} s_{13}^{2}+e_{3}^{2} s_{12}^{2}s_{23}^{2}-s_{12}^{2} s_{13}^{2}s_{23}^{2}\right \rangle,
    \end{aligned}
    \nonumber
    \end{equation}
    which geometrically corresponds to the square of the volume of the tetrahedron $O\triangle ABC$.
    \item $\I_3$: geometrically corresponds to the union of points on the danger cylinder whose projection coincides with $B$ or $C$ or the antipodal point of $A$ as
    \begin{equation}
        \I_3=\I_{31}\cap\I_{32}\cap\I_{33},
        \nonumber
    \end{equation}
    where
    \begin{equation}
        \begin{aligned}
            \I_{31}&=\J_1+\left \langle e_{2}^{2}-e_{3}^{2}+s_{23}^{2},-e_{1}^{2}+e_{2}^{2}+s_{12}^{2}   \right \rangle, \\
            \I_{32}&=\J_1+\left \langle -e_{2}^{2}+e_{3}^{2}+s_{23}^{2},-e_{1}^{2}+e_{3}^{2}+s_{13}^{2}  \right \rangle, \\
            \I_{33}&=\J_1+\left \langle -e_{1}^{2}+e_{3}^{2}+s_{13}^{2},-e_{1}^{2}+e_{2}^{2}+s_{12}^{2}  \right \rangle.
        \end{aligned}
        \nonumber
    \end{equation}
    Note that three ideals on the right correspond to the degenerate cases in which the nonzero assumption of $\mathbf{J}_{23},\mathbf{J}_{32}$ fails.
    \item $\I_4$: corresponds to some trivial degenerate cases as
    \begin{equation}
    \begin{aligned}
    \I_4=
    &\left \langle s_{12},-e_1+e_2\right \rangle  \cap \left \langle s_{12},e_1+e_2 \right \rangle \cap \left \langle s_{13}, -e_1+e_3 \right \rangle \cap \left \langle s_{13},e_1+e_3 \right \rangle \cap \left \langle s_{12}, s_{13}, s_{23} \right \rangle.
    \end{aligned}
    \nonumber
\end{equation}
\end{itemize}

The following lemma is frequently used to prove the subset relation between two varieties.

\begin{lem}[Proposition 12 of \S 4.4 \cite{cox1997ideals}]\label{Ideal contain}
    Let $\I$ and $\J$ be ideals in $k[x_1,\dots,x_n]$.
    \begin{itemize}
        \item (i) $\J\subseteq\I$ if and only if $\I:\J=k[x_1,\dots,x_n]$.
        \item (ii) $\J\subseteq \sqrt{\I}$ if and only if $\I:\J^{\infty}=k[x_1,\dots,x_n]$.
    \end{itemize}
\end{lem}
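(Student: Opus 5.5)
The plan is to prove the two items separately, both resting on the characterization of ideal quotients and saturations in a polynomial ring over a field.

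For (i), suppose first that $\J\subseteq\I$. Take any $g\in\J$. Then $1\cdot g=g\in\I$, so $1\in\I:\J=\{h : hg\in\I \text{ for all } g\in\J\}$, and the quotient is the whole ring $k[x_1,\dots,x_n]$. Conversely, if $\I:\J=k[x_1,\dots,x_n]$, then $1\in\I:\J$, which means $g=1\cdot g\in\I$ for every $g\in\J$; hence $\J\subseteq\I$. This direction uses nothing beyond the definition of the ideal quotient.

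For (ii), I will use $\I:\J^{\infty}=\bigcup_{N\ge1}(\I:\J^N)$. This union is an ascending chain of ideals, so by the Noetherian property it stabilizes at some $N_0$. Suppose $\J\subseteq\sqrt{\I}$. Since $k[x_1,\dots,x_n]$ is Noetherian, $\J=\langle g_1,\dots,g_s\rangle$ is finitely generated, and each generator satisfies $g_i^{m_i}\in\I$ for some $m_i$. Set $M=\sum_i(m_i-1)+1$. Any product of $M$ generators must contain some $g_i$ at least $m_i$ times, by pigeonhole, so $\J^M\subseteq\I$. Then $1\cdot\J^M\subseteq\I$, so $1\in\I:\J^M\subseteq\I:\J^\infty$, and the saturation is the whole ring. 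Conversely, if $1\in\I:\J^\infty$, then $1\in\I:\J^N$ for some $N$, so $\J^N\subseteq\I$. In particular $g^N\in\I$ for every $g\in\J$, which gives $\J\subseteq\sqrt{\I}$.

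The only step requiring care is producing a uniform exponent $M$ in (ii), which is where finite generation is needed. This holds by the Hilbert basis theorem, so no real obstacle arises. Note also that nothing here is specific to $k$ being algebraically closed, so the lemma applies verbatim over $\mathbb{Q}$, as in the paper's computational setting.
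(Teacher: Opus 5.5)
Your proof is correct and is the standard direct argument. The paper gives no proof of its own and simply defers to Cox--Little--O'Shea, where the saturation is defined elementwise ($f\in\I:\J^{\infty}$ iff each $g\in\J$ admits some $N$ with $fg^N\in\I$); with that definition, (ii) is just a restatement of $\J\subseteq\sqrt{\I}$. Your pigeonhole bound giving a uniform exponent $M$ is exactly the Noetherian input needed to identify that set with the union $\bigcup_N(\I:\J^N)$ you work with. The stabilization of the chain, by contrast, is never used in your argument. Your remark that no algebraic closedness is needed is correct and relevant, since the paper computes over $\mathbb{Q}$.
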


\begin{thm}\label{J1+J2}
    Suppose $\triangle ABC$ is a valid triangle, then
\begin{equation*}
\V(\J_1+\J_2)=\V(\I_1)\cup\V(\I_2)\cup\V(\I_3)\cup\V(\I_4).
\end{equation*}
\end{thm}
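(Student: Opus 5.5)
The equality of varieties will be proved by establishing the two inclusions separately, each reduced to ideal-membership or saturation computations through Lemma~\ref{Ideal contain}. Throughout, the computations are done in $\mathbb{Q}[s_{12},s_{13},s_{23},e_1,e_2,e_3]$, with the side lengths treated as variables. The hypothesis that $\triangle ABC$ is valid is used only at the end, to discard components living on $s_{ij}=0$ or on the triangle-inequality boundary.

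\textbf{The inclusion ``$\supseteq$''.} For each $\I\in\{\I_1,\I_2,\I_{31},\I_{32},\I_{33}\}$ and each component of $\I_4$, I will show $\J_1+\J_2\subseteq\sqrt{\I}$. By Lemma~\ref{Ideal contain}(ii), this amounts to checking that $\I:(\J_1+\J_2)^\infty=\langle 1\rangle$. Since $\J_1\subseteq\I$ holds by construction for $\I_1,\I_2,\I_3$, it suffices to test the single generator of $\J_2$: I will verify that some power of it lies in $\I$. Equivalently, I will reduce it modulo a Gröbner basis of $\I$ after saturating by the generator.

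For $\I_4$ the check is elementary. Substituting, for instance, $s_{12}=0$ and $e_1=\pm e_2$ into the generators of $\J_1$ and $\J_2$ should make them vanish identically.

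\textbf{The inclusion ``$\subseteq$''.} A direct primary decomposition of $\sqrt{\J_1+\J_2}$ is too expensive because the generator of $\J_2$ has about $225$ terms. Instead I will split off the known components one at a time by saturation. Let $\K_0=\J_1+\J_2$. I will form
\[
\K=\K_0:(\I_1\I_2\I_{31}\I_{32}\I_{33}\I_4)^\infty ,
\]
in practice saturating successively by single well-chosen polynomials that vanish on each piece but not identically on the others. Candidates are a generator of $\mathcal{G}$, the volume polynomial $\mathcal{H}$, the numerators $J_{23}$ and $J_{32}$, and $s_{12}s_{13}s_{23}$. Because $\V(\K_0)=\V(\K)\cup\V(\text{saturating polynomials})\cap\V(\K_0)$, the goal is to show that $\K=\langle1\rangle$. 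The leftover pieces $\V(\K_0+\langle h\rangle)$ must then be shown to lie in the listed varieties, again using Lemma~\ref{Ideal contain}(ii).

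To make this tractable, I will first exploit the fact that everything depends only on $E_i=e_i^2$ and $S_{ij}=s_{ij}^2$, and work in those variables; this lowers the degrees by half. A further option is to use the cylinder parametrization from (\ref{coordinate})--(\ref{edge}). On $\V(\J_1)$ the point $O$ is $(x,y,z)$ with $(x,y)$ on the circumcircle, so $\J_2$ restricts to a polynomial in the circle parameter $t$ and in $z$. Factoring this restriction should show directly that it vanishes only at the three Morley generatrices (from $\mathcal{G}$), on the circle $z=0$ (from $\mathcal{H}$), at the projections $B$, $C$ and the antipode of $A$ (the degenerate $u$ cases in $\I_3$), and on degenerate triangles. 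This factorization would serve as the rigorous core of the argument, with the Gröbner computations as certificates.

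\textbf{Main obstacle.} The expected main obstacle is the ``$\subseteq$'' direction: controlling the size of the saturation computations and guaranteeing that no extra component is hidden. In particular, components with $e_i=0$, or with vanishing denominators of $u$, must each be matched to $\I_3$ or $\I_4$ rather than silently dropped. My first attempt will be the parametrized-cylinder factorization, falling back on successive saturations in the squared variables if that factorization proves unwieldy.
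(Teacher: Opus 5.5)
Your ``$\subseteq$'' direction is sound and carries the same content as the paper's first certificate. The condition $(\J_1+\J_2):(\I_1\I_2\I_{31}\I_{32}\I_{33}\I_4)^\infty=\langle 1\rangle$ is equivalent to $1\in\sqrt{\J_1+\J_2}:\K$ with $\K=\I_1\cap\I_2\cap\I_3\cap\I_4$. Saturating $\J_1+\J_2$ directly avoids computing the expensive radical. So does factoring the restriction of $\J_2$ to a rational parametrization of the cylinder, provided you describe the triangle by $(x_2,x_3,y_3)$ so everything stays polynomial. Both are legitimate alternatives. This direction needs no hypothesis on the triangle, because the components of $\I_4$ already sit on the right-hand side.

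The gap is in ``$\supseteq$''. You claim that some power of the generator of $\J_2$ lies in $\I_2$ in $\mathbb{Q}[s_{12},s_{13},s_{23},e_1,e_2,e_3]$. That is false, so the check $\I_2:(\J_1+\J_2)^\infty=\langle1\rangle$ will not return $\langle 1\rangle$.

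The reason is that $\V(\I_2)$ has an extra four-dimensional component over the degenerate locus $S_{\triangle ABC}=0$. There, $\J_1$ and $\mathcal{H}$ both vanish on the whole Stewart plane in $(E_1,E_2,E_3)=(e_1^2,e_2^2,e_3^2)$. For instance, take $(s_{12},s_{13},s_{23})=(2,1,1)$ and $L=E_1+E_2-2E_3$. Then $\J_1=(L-2)(L+2)$ and $\mathcal{H}=-(L-2)^2$.

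The real points of this component are collinear P3P configurations, and these are ordinary folds of multiplicity exactly $2$. Take $A=(0,0,0)$, $C=(1,0,0)$, $B=(3,0,0)$, $O=(2,1,0)$, with rows ordered as in $\mathbf{J}$. The null vectors are $u=(3,6,-2)$ and $v=(\sqrt{10},-4,4)$, and the denominators of the paper's $u$ are nonzero there. One gets $u^{T}f''(v,v)=144\neq 0$, so this point lies in $\V(\I_2)$ but not in $\V(\J_2)$. This is exactly why the paper's second certificate carries the factor $S_{\triangle ABC}$, namely $S_{\triangle ABC}\in\K:\sqrt{\J_1+\J_2}$.

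The fix is to use the validity hypothesis in this direction, not ``at the end to discard components''. Verify $\I:\bigl(S_{\triangle ABC}\cdot(\J_1+\J_2)\bigr)^\infty=\langle1\rangle$ for each $\I$. This yields $\V(\I)\setminus\V(S_{\triangle ABC})\subseteq\V(\J_1+\J_2)$, which is all the fiberwise statement for a valid triangle requires.
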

\begin{proof}
    Let $\mathcal{K}=\mathcal{I}_1\cap\mathcal{I}_2\cap\mathcal{I}_3\cap\mathcal{I}_4$, then the following computation
    \begin{equation}
        1\in \sqrt{\J_1+\J_2}:\K\ \text{and}\ S_{\triangle ABC} \in \K:\sqrt{\J_1+\J_2},
        \nonumber
    \end{equation}
    imply $\sqrt{\J_1+\J_2}=\K$ as long as the area $S_{\triangle ABC}\neq 0$.
\end{proof}
Since $\V(\I_3),\V(\I_4)$ correspond to some degenerate cases (pseudo triple-zeros or invalid triangles), we conclude $\mathcal{V}_{\geq3}=\V(\I_1)\cup\V(\I_2)$.

\subsubsection{\texorpdfstring{Interpreting $\mathcal{V}_{\geq3}$}{}}
\label{sec:vI1}
We establish the geometric meaning of $\V(\I_1)$.

\begin{defi}[\cite{kimberling1998triangle}]
    The first Morley triangle $\triangle DEF$, short for ``Morley's triangle", is the triangle constructed from pairwise intersections of the angle trisectors of a given triangle $\triangle ABC$ (see Figure~\ref{Morley triangle}(a)).
\end{defi}

\begin{figure}[H]
\centering  
\subfigure[]{
\includegraphics[height= 0.33\linewidth]{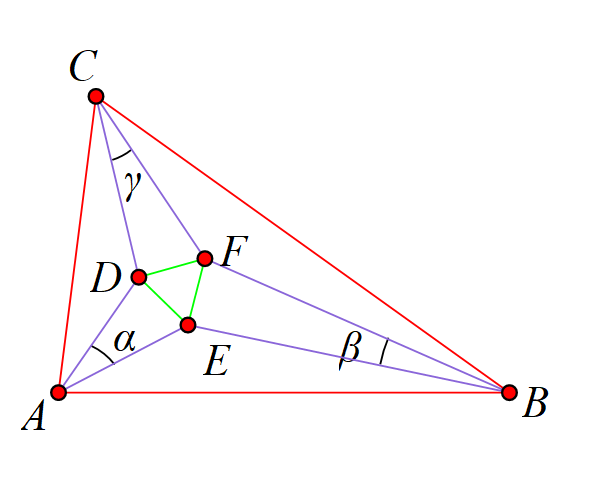}}
\subfigure[]{
\includegraphics[height= 0.33\linewidth]{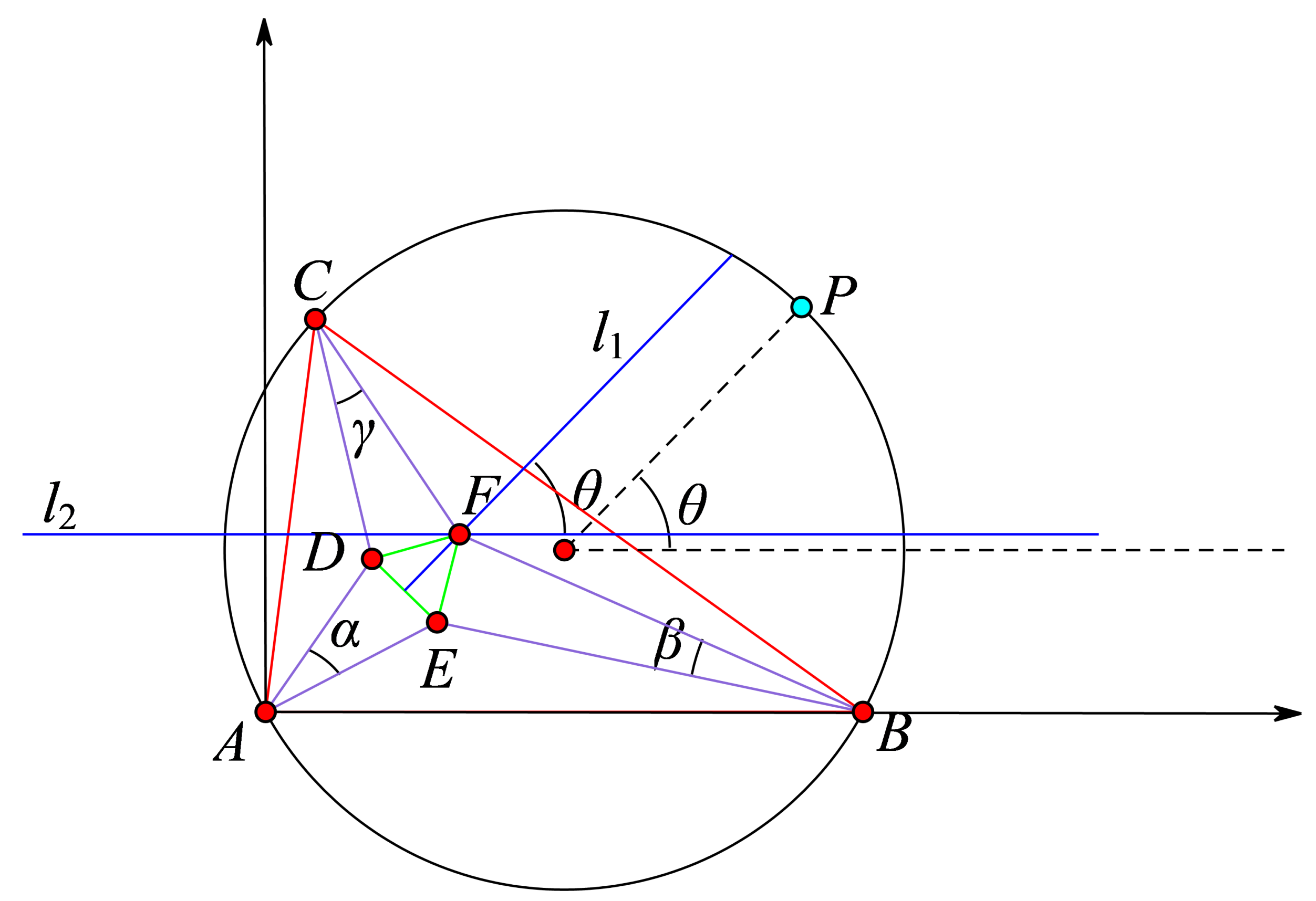}}
\caption{The first Morley triangle.}
    \label{Morley triangle}
\end{figure}

\begin{thm}
    \textbf{(Morley's theorem)} Morley triangle is equilateral.
\end{thm}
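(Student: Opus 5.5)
I will prove Morley's theorem by the classical ``backward'' construction. The idea is to build an equilateral triangle first, attach suitable triangles to it, show the resulting figure is similar to $\triangle ABC$, and read off that the trisectors meet at the vertices of the equilateral triangle. Write the angles of $\triangle ABC$ as $3\alpha,3\beta,3\gamma$, so that $\alpha+\beta+\gamma=60^\circ$. For an angle $\theta$ put $\theta^{+}=\theta+60^\circ$ and $\theta^{++}=\theta+120^\circ$.

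\textbf{Construction.} Start from an equilateral triangle $\triangle DEF$ with side $1$. On the three sides, erect outward triangles as follows:
\begin{itemize}
\item on $EF$, a triangle $\triangle A'EF$ with angles $\alpha$ at $A'$, $\gamma^{+}$ at $E$, $\beta^{+}$ at $F$;
\item on $FD$, a triangle $\triangle B'FD$ with angles $\beta$ at $B'$, $\alpha^{+}$ at $F$, $\gamma^{+}$ at $D$;
\item on $DE$, a triangle $\triangle C'DE$ with angles $\gamma$ at $C'$, $\beta^{+}$ at $D$, $\alpha^{+}$ at $E$.
\end{itemize}
Each triangle is well defined, since $\alpha+\beta^{+}+\gamma^{+}=180^\circ$. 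At each vertex of $\triangle DEF$ the angles sum to $360^\circ$. For example, at $D$ they are $60^\circ+\gamma^{+}+\beta^{+}+\alpha^{++}=360^\circ$. So the angle of the big hexagonal figure at $D$ between $DB'$ and $DC'$, inside the small triangle $\triangle B'DC'$, equals $\alpha^{++}$.

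\textbf{Key step.} I claim that $\triangle B'DC'$ has angles $\beta$ at $B'$ and $\gamma$ at $C'$. Its angle at $D$ is $\alpha+120^\circ$, so the other two angles sum to $60^\circ-\alpha=\beta+\gamma$.

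By the law of sines in $\triangle B'FD$ we get $DB'=\sin\alpha^{+}/\sin\beta$. Likewise, in $\triangle C'DE$ we get $DC'=\sin\alpha^{+}/\sin\gamma$. Hence $DB':DC'=\sin\gamma:\sin\beta$.

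The triangle with angles $\beta,\gamma$ and the included angle $\alpha^{++}$ between the corresponding sides has exactly this side ratio. By SAS similarity, and because a triangle is determined up to similarity by one angle and the ratio of its adjacent sides, $\triangle B'DC'$ has angles $\beta$ at $B'$ and $\gamma$ at $C'$. The same argument applies at $E$ and at $F$.

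\textbf{Conclusion.} Summing the angles at $A'$ gives $\alpha$ from $\triangle A'EF$, plus $\alpha$ from $\triangle A'FB'$, plus $\alpha$ from $\triangle A'EC'$, for a total of $3\alpha$. Similarly, the angle at $B'$ is $3\beta$ and the angle at $C'$ is $3\gamma$. The three lines $A'E$, $A'F$ and the sides through $A'$ split the angle at $A'$ into three equal parts, so these lines are its trisectors, and the same holds at $B'$ and $C'$.

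Therefore $\triangle A'B'C'$ is similar to $\triangle ABC$. Under this similarity, the intersections of adjacent trisectors correspond to $D,E,F$. Since similarity preserves the property of being equilateral, the Morley triangle of $\triangle ABC$ is equilateral.

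The main obstacle is the key step together with the tiling check. One must verify that the pieces fit together with no overlap, and that the adjacent trisectors meeting at $D$ are the ones coming from $B'$ and $C'$. This requires checking that all the angles are positive and less than $180^\circ$, which holds because $\alpha,\beta,\gamma\in(0^\circ,60^\circ)$.
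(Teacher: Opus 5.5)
Your proof is correct, but it runs in the opposite direction from the paper's.

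The paper works forward inside the given $\triangle ABC$. It chains the Law of Sines through $\triangle ABD$, $\triangle ABC$ and $\triangle AEC$ to get $AD/AE=\sin(\frac{\pi}{3}+\beta)/\sin(\frac{\pi}{3}+\gamma)$. It then uses the same ``one angle plus adjacent-side ratio'' similarity principle you use to conclude that the corner triangle $\triangle ADE$ has angles $\frac{\pi}{3}+\beta$ and $\frac{\pi}{3}+\gamma$. It finishes with the angle chase $2\pi-(\pi-\alpha-\beta)-(\frac{\pi}{3}+\beta)-(\frac{\pi}{3}+\alpha)=\frac{\pi}{3}$ at a Morley vertex. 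That route needs the identity $\sin 3x=4\sin x\sin(\frac{\pi}{3}+x)\sin(\frac{\pi}{3}-x)$, which is hidden in the paper's one-line simplification. It also takes for granted how the pieces are arranged around the Morley vertex.

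Your backward construction avoids the identity entirely. Each erected triangle has a unit side on $\triangle DEF$, so the ratio $DB':DC'=\sin\gamma:\sin\beta$ comes straight from the Law of Sines. The configuration question also becomes concrete. The four angles at each of $D,E,F$ are explicit, lie in $(0^\circ,180^\circ)$ and sum to $360^\circ$. Hence at $A'$ the three angles $\alpha$ sit side by side across the rays $A'E$ and $A'F$ and add up to $3\alpha<180^\circ$.

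The price is the closing transfer step, which has three parts:
\begin{itemize}
\item $\triangle A'B'C'$ has angles $3\alpha,3\beta,3\gamma$ and is therefore similar to $\triangle ABC$;
\item $D,E,F$ are its adjacent-trisector intersections, which you should state explicitly from the gap triangles (for instance $\angle C'B'D=\beta$ and $\angle B'C'D=\gamma$ from $\triangle B'DC'$);
\item a similarity carries trisectors to trisectors, so it maps Morley triangle to Morley triangle.
\end{itemize}
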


We continue to use the coordinate system (\ref{coordinate}). Suppose $\triangle DEF$ is the Morley triangle of $\triangle ABC$ (see Figure~\ref{Morley triangle}(b)), let $\theta$ denote the angle between $l_1$ and $l_2$, namely the altitude on $DE$ and the direction of $AB$, then
the camera center $O$ and its projection $P$ can be parameterized as
\begin{equation}
    \begin{aligned}
       O=\left(\frac{x_2}{2}+R\cos{\theta},\ \frac{x_3^2+y_3^2-x_2x_{3}}{2y_3}+R\sin{\theta},\ h\right),\\
       P=\left(\frac{x_2}{2}+R\cos{\theta},\ \frac{x_3^2+y_3^2-x_2x_{3}}{2y_3}+R\sin{\theta},\ 0\right),
    \end{aligned}
    \nonumber
\end{equation}
where $R$ is the radius of the circumcircle of $\triangle ABC$ satisfying
\begin{equation}
    R=\sqrt{\left(\frac{x_2}{2}\right)^2+\left(\frac{x_3^2+y_3^2-x_2x_3}{2y_3}\right)^2},
    \nonumber
\end{equation}
$h$ is the height of $O$ over the $\triangle ABC$ plane, and $\sin{\theta}$ satisfies $f_1=0$, where
\begin{equation}
    \begin{aligned}
        f_1:=&8s_{13}s_{23}s_{12}^{2}\sin^{3}\theta-6s_{13}s_{23}s_{12}^{2}\sin\theta+s_{12}^{2}s_{13}^{2}+s_{12}^{2}s_{23}^{2}-s_{13}^{4}+2s_{13}^{2}s_{23}^{2}-s_{23}^{4},
    \end{aligned}
    \nonumber
\end{equation}
which defines the location of $P$ with respect to $\triangle DEF$. The proofs of Morley's theorem and $f_1$ can be found in \ref{app Morley}.

Applying the Pythagorean theorem to triangles $\triangle OPA,\triangle OPB$, and $\triangle OPC$ respectively yields the following three distance equations $f_2=f_3=f_4=0$, where
\begin{equation}
    \begin{aligned}
    &f_2:=(x + R \cos \theta)^2 + (y + R \sin \theta)^2 + h^2 - e_1^2, \\
    &f_3:=(x + R \cos \theta - x_2)^2 + (y + R \sin \theta)^2 + h^2 - e_2^2,  \\
    &f_4:=(x + R \cos \theta - x_3)^2 + (y + R \sin \theta - y_3)^2 + h^2 - e_3^2, \nonumber
\end{aligned}
\end{equation}
and $(x,y)$ is the circumcenter of $\triangle OPC$.
Finally, we introduce the trigonometric identity $f_5=0$, where
\begin{equation}
    f_5:=\sin^2 \theta+\cos^2 \theta-1.
    \nonumber
\end{equation}

\begin{defi}[Definition 1 of \S 3.1 \cite{cox1997ideals}]\label{Elimination Ideal}
    Given $\I  
    \subseteq k[x_1,\dots,x_l,x_{l+1},\ldots,x_n]$, the $l$-th elimination ideal $\I_l$ is the ideal of $k[x_{l+1},\dots,x_n]$ defined by
    \begin{equation}
        \I_l=\I\cap k[x_{l+1},\dots,x_n].
        \nonumber
    \end{equation}
\end{defi}

Let $\F=\left \langle f_1,f_2,f_3,f_4,f_5 \right \rangle \subseteq k[\sin \theta, \cos \theta,h,e_1,e_2,e_3,s_{12},s_{13},s_{23}]$, then
\begin{equation}
    \begin{aligned}
    \IM=&\F\cap k[e_1,e_2,e_3,s_{12},s_{13},s_{23}]\\
    =&\J_1+\left \langle  -e_1^{6}s_{12}^{6}+3e_1^{6}s_{12}^{4}s_{13}^{2}+e_1^{6}s_{12}^{4}s_{23}^{2}-3e_1^{6}s_{12}^{2}s_{13}^{4}+2e_1^{6}s_{12}^{2}s_{13}^{2}s_{23}^{2}+s_{12}^{6}s_{13}^{4}s_{23}^{2}\right.  \\
    &\left.-s_{12}^{6}s_{13}^{2}s_{23}^{4}+s_{12}^{4}s_{13}^{6}s_{23}^{2}+\cdots \text{60 terms}\cdots-2s_{12}^{4}s_{13}^{4}s_{23}^{4}+s_{12}^{4}s_{13}^{2}s_{23}^{6}, \right.  \\
    &\left. e_1^{8}s_{12}^{8}-4e_1^{8}s_{12}^{6}s_{13}^{2}-4e_1^{8}s_{12}^{6}s_{23}^{2}+6e_1^{8}s_{12}^{4}s_{13}^{4}+4e_1^{8}s_{12}^{4}s_{13}^{2}s_{23}^{2}+2e_3^{2}s_{12}^{6}s_{13}^{6}s_{23}^{2}\right.\\
    &\left.-4e_3^{2}s_{12}^{6}s_{13}^{4}s_{23}^{4}+\cdots \text{106 terms} \cdots+2e_3^{2}s_{12}^{6}s_{13}^{2}s_{23}^{6} +4s_{12}^{8}s_{13}^{4}s_{23}^{4}\right \rangle
\end{aligned}
\nonumber
\end{equation}

The geometric meaning of $\V(\IM)$ can be described as follows: it consists of three generatrices of the danger cylinder, where each generatrix intersects the circumcircle at the intersection of the circumcircle and a line passing through the circumcenter of $\triangle ABC$ and parallel to one altitude of $\triangle DEF$, as shown in Figure~\ref{fig:mu3}(a,b).

By computational verification using Lemma~\ref{Ideal contain}, we derive
\begin{equation}
    1\in \IM:\I_1^\infty \quad \text{and}\quad 1\in \I_1:\IM^\infty,
    \nonumber
\end{equation}
which implies $\V(\IM)=\V(\I_1)$. 

\begin{thm}
    $(e_1,e_2,e_3)$ is a singular solution of (\ref{tetra_form}) with multiplicity $\mu \ge 3$, \emph{if and only if} the camera center $O$ lies on one of three generatrices of the danger cylinder associated with the first Morley triangle or the circumcircle of $\triangle ABC$ (see Figure~\ref{fig:mu3}).
\end{thm}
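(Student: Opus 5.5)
The plan is to combine the algebraic stratification already set up with the geometric identifications of the previous subsection.

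\emph{Necessity.} Suppose $(e_1,e_2,e_3)$ has multiplicity $\mu\ge 3$. By Lemma~\ref{breadthone} the Jacobian has corank exactly one, so the criterion of \cite[Theorem 1]{li2022improved} applies. It shows that $\mu\ge 3$ forces $u^H\Delta_1[f]=u^H\Delta_2[f]=0$ at the solution, that is, $(e_1,e_2,e_3)\in\V(\J_1+\J_2)$.

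Theorem~\ref{J1+J2} then places the point in $\V(\I_1)\cup\V(\I_2)\cup\V(\I_3)\cup\V(\I_4)$. I would discard $\V(\I_4)$ directly, since it requires some $s_{ij}=0$ or all $s_{ij}=0$, contradicting the validity of $\triangle ABC$.

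For $\V(\I_3)$, the ideals $\I_{3j}$ describe exactly the points where $\mathbf{J}_{23}$ or $\mathbf{J}_{32}$ vanish. There the chosen null vector $u$ is not defined, so $\J_2$ is not the correct second-order condition. I would redo the criterion on each component with the null vector written for a different pivot, for instance taking the first or second row as the normalized entry. The aim is to show that the resulting second-order polynomial does not vanish identically on $\V(\I_{3j})$, except at points already lying in $\V(\I_1)\cup\V(\I_2)$. This can be checked with the same colon-ideal test of Lemma~\ref{Ideal contain}. Points of $\V(\I_3)$ not in $\V(\I_1)\cup\V(\I_2)$ are therefore only pseudo triple zeros.

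Next I identify the two surviving components. From $\V(\IM)=\V(\I_1)$ and the construction of $\F$, $\V(\I_1)$ is the set of points $O$ at height $h$ above the points $P$ of the circumcircle whose angle $\theta$ satisfies the Morley cubic $f_1=0$. These are the three Morley generatrices.

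For $\V(\I_2)$, the generator of $\mathcal{H}$ is (up to a constant) the Cayley--Menger expression for the squared volume of $O\triangle ABC$, so it vanishes exactly when $h=0$. Combined with $\J_1$ (the danger cylinder, Theorem~\ref{thm1}), this gives the circumcircle.

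\emph{Sufficiency.} For $O$ on a Morley generatrix or on the circumcircle, the corresponding $(e_1,e_2,e_3)$ lies in $\V(\J_1+\J_2)$. By \cite[Theorem 1]{li2022improved} the local dual space then has dimension at least $3$, so $\mu\ge 3$; on the circumcircle the solution may even be non-isolated, consistent with Su et al.

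The main obstacle is the degenerate component $\V(\I_3)$. One must verify that the change of null-vector normalization removes all spurious points there, and that generic points of $\V(\I_1)$ are not accidentally degenerate in a way that invalidates the criterion. I would check the latter by exhibiting one explicit point on a Morley generatrix with $\mathbf{J}_{23}\mathbf{J}_{32}\ne0$ and computing its multiplicity directly.
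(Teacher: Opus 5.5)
Your proposal is correct and follows essentially the paper's own route: you reduce $\mu\ge 3$ to $\V(\J_1+\J_2)$ via the corank-one criterion, apply the decomposition of Theorem~\ref{J1+J2}, discard $\V(\I_3)$ and $\V(\I_4)$, and then identify $\V(\I_1)$ with the Morley generatrices through $\V(\IM)=\V(\I_1)$ and $\V(\I_2)$ with the circumcircle through the squared-volume (Cayley--Menger) generator of $\mathcal{H}$. The one difference is that you plan to justify discarding $\V(\I_3)$ by rerunning the second-order test with a different pivot of the left null vector, whereas the paper simply asserts that these components are pseudo triple zeros; that is a strengthening of the paper's argument, not a departure from it.
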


\begin{figure}[H]
\centering  
\subfigure[]{
\includegraphics[height=0.33\textwidth]{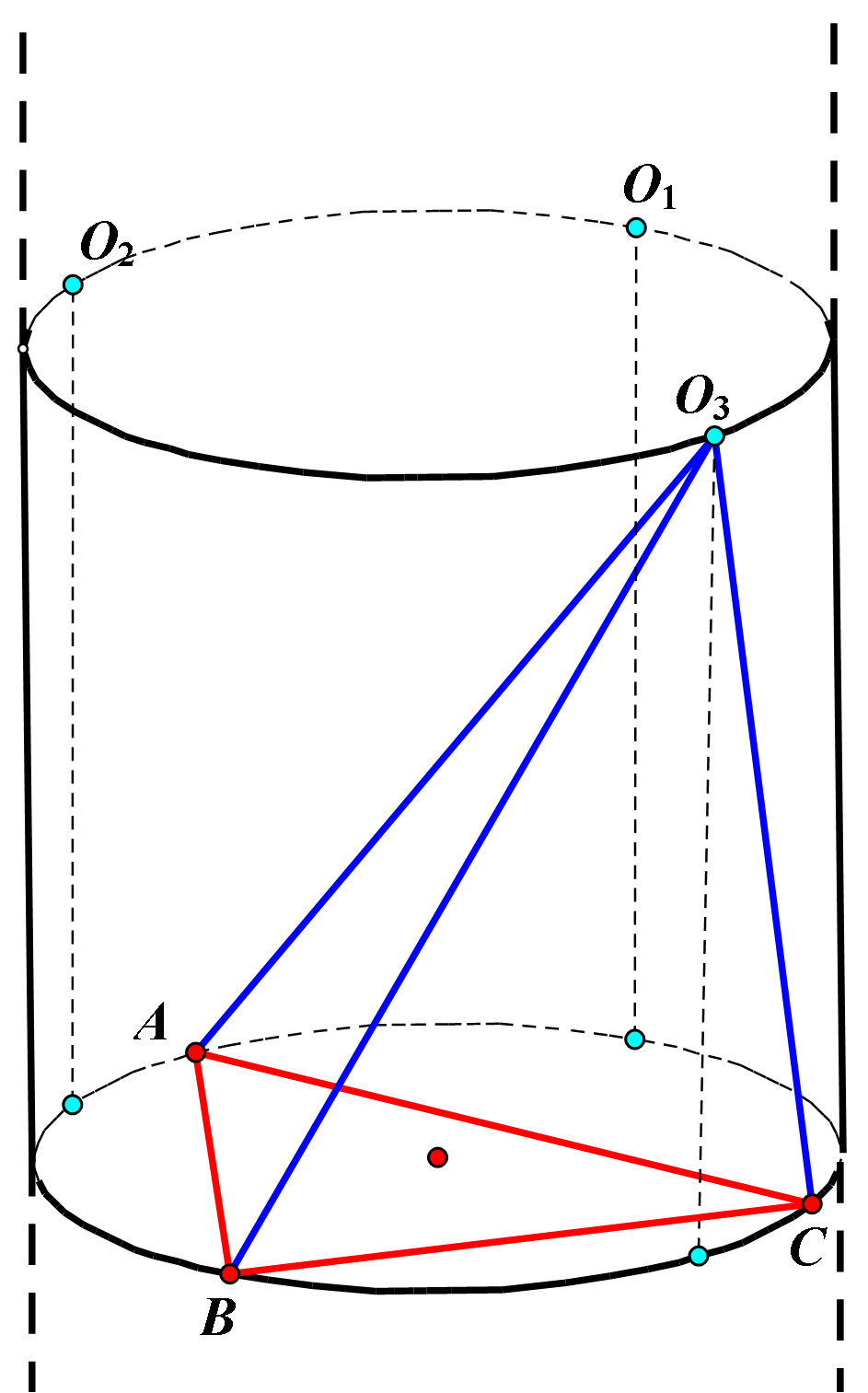}}
\subfigure[]{
\includegraphics[height=0.33\textwidth]{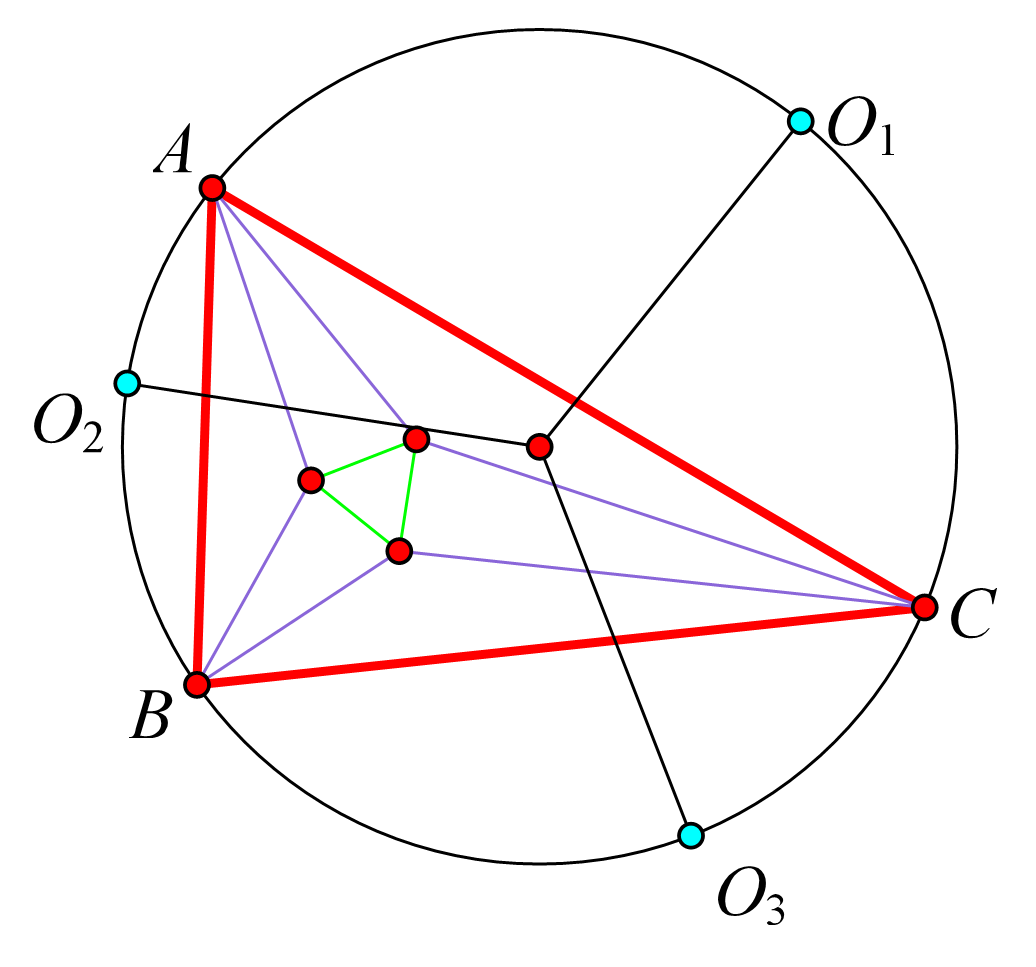}}
\subfigure[]{
\includegraphics[height=0.33\textwidth]{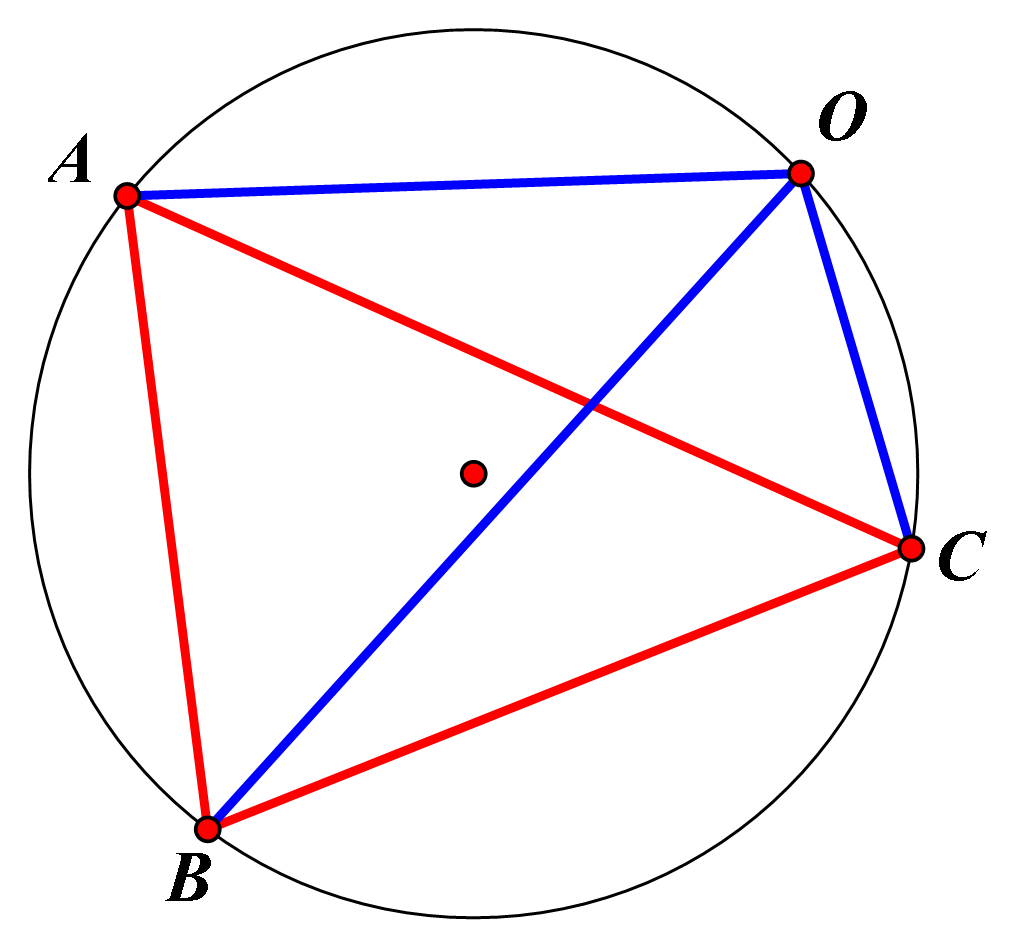}}
\caption{The geometry of $\mathcal{V}_{\geq3}$.}
\label{fig:mu3}
\end{figure}

\subsection{\texorpdfstring{$\mathcal{V}_{\geq4}\subseteq\V(\J_1+\J_2+\J_3)$}{}}
\subsubsection{\texorpdfstring{Decomposing $\V(\J_1+\J_2+\J_3)$}{}}
Suppose $\dim \mathcal{D}_{f,\xi}\geq 3$, then $\J_3$ is computed as
\begin{equation}
    \begin{aligned}
     \J_3=&\left \langle u^H\Delta_3[f]/8e_2^3e_3^3 \right \rangle\\
     =&\left \langle -e_1^{16}e_2^2s_{12}^4+e_1^{16}e_2^2s_{12}^2s_{13}^2+e_1^{16}e_3^2s_{12}^2s_{13}^2-e_1^{16}e_3^2s_{13}^4-e_1^{16}s_{12}^2s_{13}^2s_{23}^2 \right.\\
     &\left.+2e_1^{14}e_2^4s_{12}^4-4e_1^{14}e_2^4s_{12}^2s_{13}^2+\cdots\text{745 terms}\cdots+2e_3^8s_{12}^{10}s_{13}^4-e_3^6s_{12}^{10}s_{13}^4s_{23}^2\right. \\
     & \left.+4e_3^4s_{12}^{10}s_{13}^8-4e_3^4s_{12}^{10}s_{13}^6s_{23}^2-e_3^2s_{12}^{10}s_{13}^{8}s_{23}^2-s_{12}^{10}s_{13}^{10}s_{23}^2 \right \rangle.
\end{aligned}
\nonumber
\end{equation}

Due to the enormous computational cost of calculating the radical ideal of $\J_1+\J_2+\J_3$ and its decomposition, we adopt an alternative approach
\begin{equation}
    \begin{aligned}
    \V(\J_1+\J_2+\J_3)&=\V(\J_1+\J_2)\cap\V(\J_3) \\
    &=(\V(\I_1)\cup \V(\I_2)\cup \V(\I_3)\cup \V(\I_4))\cap \V(\J_3) \\
    &=(\V(\I_1)\cap \V(\J_3))\cup (\V(\I_2)\cap \V(\J_3))\cup \\
    &\ \ \ \ (\V(\I_3)\cap \V(\J_3))\cup (\V(\I_4)\cap \V(\J_3)).
\end{aligned}
\nonumber
\end{equation}

Since $\V(\I_3),\V(\I_4)$ correspond to degenerate cases, the geometrically meaningful components are $\V(\I_1)\cap \V(\J_3),\V(\I_2)\cap \V(\J_3)$.
\begin{lem}
    Suppose $\triangle ABC$ is a valid triangle, then
    \begin{equation}
        \V(\I_1+\J_3)\subseteq\V(\I_2)\subseteq\V(\J_3).
        \nonumber
    \end{equation}
\end{lem}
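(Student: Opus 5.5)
The statement consists of two inclusions, and I will prove each one with the ideal-membership criterion of Lemma~\ref{Ideal contain}, following the style of Theorem~\ref{J1+J2}. Throughout, I allow a nondegeneracy factor such as $S_{\triangle ABC}$ (or the product $s_{12}s_{13}s_{23}$) to appear, because the statement assumes $\triangle ABC$ is valid.

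For the inclusion $\V(\I_2)\subseteq\V(\J_3)$, I need $\J_3\subseteq\sqrt{\I_2}$, up to the area factor. Write $g_3$ for the generator of $\J_3$.
\begin{itemize}
\item The first attempt is the cheapest one: reduce $g_3$ modulo a Gröbner basis of $\I_2=\J_1+\mathcal{H}$. The ideal $\I_2$ has only two generators of low degree, so this should be feasible.
\item If the remainder is nonzero, I compute the saturation $\I_2:\langle g_3\rangle^\infty$ and check whether it equals the whole ring. Otherwise I check that $S_{\triangle ABC}^m g_3^k\in\I_2$ for some $m,k$.
\end{itemize}
There is also a geometric reason to expect this. $\V(\I_2)$ is the circumcircle, and for $O$ on it the system has infinitely many solutions, as Su et al.\ showed. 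So every condition $u^H\Delta_k[f]$ must vanish there, and the computation only certifies this.

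For the inclusion $\V(\I_1+\J_3)\subseteq\V(\I_2)$, I need $\mathcal{H}\subseteq\sqrt{\I_1+\J_3}$ modulo the degeneracy factor. Here I will compute the saturation $(\I_1+\J_3):\mathcal{H}^\infty$ and show that it contains a power of $S_{\triangle ABC}$, or equivalently of the factor $(s_{23}-s_{13}+s_{12})(s_{23}-s_{13}-s_{12})(s_{23}+s_{13}+s_{12})(s_{23}+s_{13}-s_{12})$.

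The second inclusion is the main obstacle, because $g_3$ has about 750 terms and $\I_1$ already has four generators in nine variables. To make it tractable I plan two simplifications.
\begin{itemize}
\item \textbf{Work on the generatrices directly.} Replace $\I_1$ by the elimination description $\IM$, or better, use the parametrization by $\sin\theta$, $\cos\theta$ and $h$ with the relations $f_1,\dots,f_5$. On a generatrix, $g_3$ becomes a polynomial in $h$ whose coefficients depend on the triangle. I then need to show that, outside the degenerate locus, its common zeros with the constraints force $h=0$, which is the circumcircle.
\item \textbf{Fall back to a two-step saturation.} If that still blows up, I work over the fraction field of the $s_{ij}$, i.e.\ with generic triangles. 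I would compute the saturation there first and then clear denominators to identify the exceptional factors. Finally I check that those factors lie in $\langle S_{\triangle ABC}\rangle$ or in the trivial components $\I_4$.
\end{itemize}
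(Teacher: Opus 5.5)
Your plan is correct and is essentially the paper's proof. The paper just reports two computer checks, $1\in(\I_1+\J_3):\I_2^{\infty}$ and $S_{\triangle ABC}\in\I_2:\J_3$ (i.e.\ $S_{\triangle ABC}\,g_3\in\I_2$), which are the strongest forms of your saturation test (using $\mathcal{H}$ instead of $\I_2$ is harmless since $\J_1\subseteq\I_1$) and of your test $S_{\triangle ABC}^m g_3^k\in\I_2$ with $m=k=1$, so your fallbacks via the $\theta,h$ parametrization or generic triangles are not needed.
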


\begin{proof}
    The following computation
    \begin{equation}
        1\in (\I_1+\J_3):\I_2^{\infty}\ \text{and}\ S_{\triangle ABC}\in \I_2:\J_3,
        \nonumber
    \end{equation}
    imply $\I_2\subseteq \sqrt{\I_1+\J_3}$ and $\J_3\subseteq \I_2$ as long as $S_{\triangle ABC}\neq 0$.
\end{proof}

Straightforwardly, we derive $\V(\I_1)\cap \V(\J_3)\subseteq\V(\I_2)\cap \V(\J_3)=\V(\I_2)$, which concludes $\mathcal{V}_{\geq4}=\V(\I_2)$.
\subsubsection{\texorpdfstring{Interpreting $\mathcal{V}_{\geq4}$}{}}
\begin{thm}
     $(e_1,e_2,e_3)$ is a singular solution of (\ref{tetra_form}) with multiplicity $\mu \ge 4$, \emph{if and only if} the camera center $O$ lies on the circumcircle of $\triangle ABC$.
\end{thm}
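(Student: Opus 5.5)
The plan is to combine the algebraic identification $\mathcal{V}_{\geq4}=\V(\I_2)$, obtained just above, with a purely geometric reading of the generator of $\mathcal{H}$. The argument has three steps.

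\textbf{Step 1: $\mathcal{H}$ cuts out the plane of $\triangle ABC$.} Using the coordinate system (\ref{coordinate}) and the distance formulas (\ref{edge}), substitute $e_1^2,e_2^2,e_3^2$ as functions of $(x,y,z)$ into the generator of $\mathcal{H}$. This generator is, up to a nonzero constant, the Cayley--Menger determinant of $O,A,B,C$, i.e.\ $288\,\mathrm{Vol}(OABC)^2$. I expect the substitution to reduce it to $c\cdot S_{\triangle ABC}^2\, z^2$ with $c\neq 0$; this is a routine symbolic check. Hence, for a valid triangle ($S_{\triangle ABC}\neq0$), $\V(\mathcal{H})$ is exactly the condition $z=0$, i.e.\ $O$ lies in the plane of $\triangle ABC$.

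\textbf{Step 2: intersect with the danger cylinder.} By Theorem~\ref{thm1} and the derivation of (\ref{cylinder}), $\V(\J_1)$ is the right cylinder over the circumcircle, with equation $x^2+y^2-x_2x+\frac{x_2x_3-x_3^2-y_3^2}{y_3}y=0$ and $z$ free. Here the omitted factor $(s_{23}-s_{13}+s_{12})\cdots$ is nonzero for a valid triangle. Intersecting the cylinder with the plane $z=0$ from Step 1 gives exactly the circumcircle. Therefore $\V(\I_2)=\V(\J_1+\mathcal{H})$ corresponds precisely to $O$ lying on the circumcircle.

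\textbf{Step 3: both directions.} ($\Rightarrow$) If $\mu\ge4$, then $(e_1,e_2,e_3)\in\mathcal{V}_{\geq4}=\V(\I_2)$, so $O$ lies on the circumcircle by Step 2. ($\Leftarrow$) If $O$ lies on the circumcircle, then $O\in\V(\I_2)\subseteq\V(\J_3)$ by the preceding lemma. Since $\V(\I_2)\subseteq\V(\J_1)$ and $\V(\I_2)$ is one of the components in Theorem~\ref{J1+J2}, we get $O\in\V(\J_1+\J_2+\J_3)$. The criterion of \cite[Theorem 1]{li2022improved} applies because the Jacobian has corank one (Lemma~\ref{breadthone}), so $\dim\mathcal{D}_{f,\xi}\ge4$.

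To make the converse rigorous independently of the local-dual criterion, I would also invoke the result of Su et al.\ \cite{su1998necessary}: when $O$ lies on the circumcircle, (\ref{tetra_form}) has infinitely many solutions. Combined with the earlier remark, this shows the multiplicity is indeed $\ge4$ and in fact infinite.

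\textbf{Main obstacle.} The genuine subtlety is in Step 3, not in the computation of Step 1. When $O$ lies on the circumcircle the solution is not isolated, so Definition~\ref{mul} does not literally apply, and I must interpret ``multiplicity $\ge4$'' in the sense of $\dim\mathcal{D}_{f,\xi}\ge4$. I also need to check that the nondegeneracy assumptions ($\mathbf{J}_{23},\mathbf{J}_{32}\neq0$, which exclude the $\I_3$ cases, and a valid triangle, which excludes $\I_4$) hold generically on the circle. Any exceptional points, namely $A$, $B$, $C$ and the projections singled out by $\I_3$, would be treated by continuity or excluded as degenerate configurations.
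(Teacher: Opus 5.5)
Your proposal is correct and follows the paper's route. The paper likewise combines $\mathcal{V}_{\geq4}=\V(\I_2)=\V(\J_1+\mathcal{H})$ with the reading of $\mathcal{H}$ as the squared volume of $O\triangle ABC$, so that $\V(\I_2)$ is the danger cylinder cut by the plane of $\triangle ABC$, i.e.\ the circumcircle; it then notes via the Inscribed Angle Theorem that the multiplicity there is in fact infinite. Your explicit Cayley--Menger check (the generator of $\mathcal{H}$ is exactly $144\,\mathrm{Vol}(OABC)^2=16S_{\triangle ABC}^2z^2$) and your appeal to Su et al.~\cite{su1998necessary} for the converse make rigorous what the paper only asserts.
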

However, when the camera center lies on the circumcircle, the P3P problem admits infinitely many solutions by the Inscribed Angle Theorem. 
We conclude that the P3P problem does not admit a quadruple zero. Thus, over the complex field, the P3P problem always admits either multiple or infinitely many solutions. 

\subsection{Answer to Question~\ref{Q1}}

Recall the question: given $\triangle ABC$, suppose that $(e_1,e_2,e_3)$ is a singular solution of (\ref{tetra_form}) with multiplicity $\geq\mu$,
then where is the location of the corresponding camera center $O$? We summarize the answer as follows.

\begin{itemize}
    \item For $O\in\mathcal{V}_{\geq 2}$, $O$ lies on \textbf{the danger cylinder} of $\triangle ABC$,
    \item For $O\in\mathcal{V}_{\geq 3}$, $O$ lies on one of \textbf{three Morley generatrices} of the danger cylinder or \textbf{the circumcircle} of $\triangle ABC$,
    \item For $O\in\mathcal{V}_{\geq 4}$, $O$ lies on \textbf{the circumcircle} of $\triangle ABC$, whose multiplicity is indeed infinite.
\end{itemize}

\section{Complementary Stratification for Singular Configurations}\label{Geometric Discriminant}
Given a 3D triangle $\triangle ABC$, i.e., $(s_{12}, s_{13}, s_{23})$ is fixed, suppose $(e_1,e_2,e_3)$ is a singular solution of (\ref{tetra_form}) with multiplicity $\geq\mu$ $(\mu=2,3)$, let $(e^\prime_1,e^\prime_2,e^\prime_3)$ denote a complementary solution of (\ref{tetra_form}) associated with $(e_1,e_2,e_3)$, then
\begin{equation}
    \frac{e_i^2+e_j^2-s_{ij}^2}{e_ie_j}=\frac{{e_i^{\prime}}^2+{e_j^{\prime}}^2-{s_{ij}^{\prime}}^2}{e_i^{\prime}e_j^{\prime}},~~i,j\in\{1,2,3\}~\&~i<j,
    \nonumber
\end{equation}
which yields three equations
\begin{equation}
    \begin{aligned}
     &f_1:=-e_2^2e_2^{\prime}e_3^{\prime}+e_2e_3{e_2^{\prime}}^2+e_2e_3{e_3^{\prime}}^2-e_2e_3s_{23}^2-e_3^2e_2^{\prime}e_3^{\prime}+e_2^{\prime}e_3^{\prime}s_{23}^2,\\
     &f_2:=-e_1^2e_1^{\prime}e_3^{\prime}+e_1e_3{e_1^{\prime}}^2+e_1e_3{e_3^{\prime}}^2-e_1e_3s_{13}^2-e_3^2e_1^{\prime}e_3^{\prime}+e_1^{\prime}e_3^{\prime}s_{13}^2,\\
     &f_3:=-e_1^2e_1^{\prime}e_2^{\prime}+e_1e_2{e_1^{\prime}}^2+e_1e_2{e_2^{\prime}}^2-e_1e_2s_{12}^2-e_2^2e_1^{\prime}e_2^{\prime}+e_1^{\prime}e_2^{\prime}s_{12}^2.\\
\end{aligned}
\nonumber
\end{equation}

\begin{lem}[Lemma 1 of \S 3.2 \cite{cox1997ideals}]\label{projection map}
    Consider the projection map
    \begin{equation}
        \pi_l:k^n\longrightarrow k^{n-l}.
        \nonumber
    \end{equation}
    Let $\I \subseteq k[x_1,\ldots,x_n]$ and $\I_l=\I \cap k[x_{l+1},\dots,x_n] $ be the $l$-th elimination ideal, then
    \begin{equation}
        \V(\I_l)=\overline{\pi_l(\V(\I))},
        \nonumber
    \end{equation}
    where $\overline{\bullet} $ denotes the Zariski closure.
\end{lem}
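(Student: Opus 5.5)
This is the standard closure theorem; the statement is over an algebraically closed field $k$, which is the setting of the paper (we work over $\mathbb{C}$). The plan is to prove two inclusions.

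\emph{First inclusion:} $\pi_l(\V(\I))\subseteq\V(\I_l)$. Take $a=(a_1,\dots,a_n)\in\V(\I)$ and $g\in\I_l$. Since $g\in\I$, we have $g(a)=0$. But $g$ involves only $x_{l+1},\dots,x_n$, so $g(\pi_l(a))=g(a)=0$. Because $\V(\I_l)$ is Zariski closed, this inclusion passes to the closure: $\overline{\pi_l(\V(\I))}\subseteq\V(\I_l)$.

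\emph{Reverse inclusion.} Write $\overline{\pi_l(\V(\I))}=\V(\mathbf{I}(\pi_l(\V(\I))))$. It suffices to show $\mathbf{I}(\pi_l(\V(\I)))\subseteq\sqrt{\I_l}$, since then $\V(\I_l)=\V(\sqrt{\I_l})\subseteq\V(\mathbf{I}(\pi_l(\V(\I))))$. So let $f\in k[x_{l+1},\dots,x_n]$ vanish on $\pi_l(\V(\I))$. Viewed as an element of $k[x_1,\dots,x_n]$, $f$ vanishes on all of $\V(\I)$. By Hilbert's Nullstellensatz (here algebraic closedness is used), $f^N\in\I$ for some $N$. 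Since $f^N$ involves only the last $n-l$ variables, $f^N\in\I\cap k[x_{l+1},\dots,x_n]=\I_l$. Hence $f\in\sqrt{\I_l}$, which gives $\V(\I_l)\subseteq\overline{\pi_l(\V(\I))}$.

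The only nontrivial step is the appeal to the Nullstellensatz. That appeal is what limits the statement to algebraically closed fields. Over $\mathbb{Q}$ or $\mathbb{R}$ the claim should be read for complex points, and the paper's computations over $\mathbb{Q}$ remain valid because elimination ideals commute with field extension: Gröbner bases computed over $\mathbb{Q}$ stay Gröbner bases over $\mathbb{C}$.
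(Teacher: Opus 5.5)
Your proof is correct and is the standard proof of the Closure Theorem in Cox--Little--O'Shea, which the paper cites without proving: you get the easy inclusion from closedness of $\V(\I_l)$, and the reverse inclusion from the Nullstellensatz, which gives $\mathbf{I}(\pi_l(\V(\I)))\subseteq\sqrt{\I_l}$. You are also right to require that $k$ be algebraically closed, a hypothesis the paper's statement leaves implicit: over $\mathbb{R}$, the ideal $\I=\langle x_1^2+x_2^2+1\rangle$ has $\V(\I_1)=\mathbb{R}$ but an empty projection.
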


Let $\F=\left \langle f_1,f_2,f_3 \right \rangle \subseteq k[e_1,e_2,e_3,e_1^\prime,e_2^\prime,e_3^\prime,s_{12},s_{13},s_{23}]$, suppose $(e_1,e_2,e_3)\in\V(\K_{\mu})$ is a singular solution of (\ref{tetra_form}) with multiplicity $\geq\mu$, then we intend to compute
\begin{equation}
    \K_{\mu}^\prime=(\F+\K_{\mu})\cap k[e_1^\prime,e_2^\prime,e_3^\prime,s_{12},s_{13},s_{23}],
    \nonumber
\end{equation}
where $\V(\K_{\mu}^\prime)=\overline{\pi(\V(\F+\K_{\mu}))}$ characterizes  the Zariski closure of all complementary solutions.
\begin{figure}[H]
\centering
\[
\boxed{
    \begin{codi}
    \obj[row sep=0.4em]{
     \mathrm{geometric:} & \mathcal{V}^\prime_{\geq2} & \supseteq & \mathcal{V}^\prime_{\geq3}  \\
     & \rotatebox{90}{$\supseteq$} & & \rotatebox{90}{$\supseteq$}\\
    \mathrm{algebraic:} & \V(\K^\prime_2)&\supseteq & \V(\K^\prime_3)  \\
    };
    \end{codi}
}
\]
\caption{Stratification framework II.}
\label{research framework_}
\end{figure}
Similarly to the role of the framework in Figure~\ref{research framework}, the derivations in the following subsections are based on the framework from Figure ~\ref{research framework_}.

\subsection{\texorpdfstring{$\mathcal{V}^\prime_{\ge 2}\subseteq\V(\K'_2)$}{}}
\subsubsection{\texorpdfstring{Decomposing $\V(\K'_2)$}{}}
Since $\mathcal{V}_{\ge 2}=\V(\J_1)$, let $\K_2=\J_1$ and compute
\begin{equation}
    \K_2^\prime=(\F+\J_1)\cap k[e_1^\prime,e_2^\prime,e_3^\prime,s_{12},s_{13},s_{23}].
    \nonumber
\end{equation}
However, due to the large number of variables and the high degree of polynomials, direct symbolic computation of $\K^\prime_2$ is extremely expensive. Therefore, a series of numerical experiments were carried out with fixed values of $s_{12},s_{13},s_{23}$.

For example, let $(s_{12},s_{13},s_{23})=(7,6,5)$ define the case $\triangle ABC$ of general acute triangles, then $\K^\prime_2$ is computed as
\begin{equation}
    \begin{aligned}
    \K^\prime_2=&\left \langle 54045009375000e_1'^{25}e_2'^{13}e_3'-134561043750000e_1'^{25}e_2'^{11}e_3'^{3}\right. \\
    &\left.+219894084375000e_1'^{25}e_2'^{9}e_3'^{5}-236144475000000e_1'^{25}e_2'^{7}e_3'^{7}\right.\\
    &\left.+193587975000000e_1'^{25}e_2'^{5}e_3'^{9}+\cdots\text{1167 terms}\cdots \right. \\
    &\left.-149079662555165224535534765625000000000000e_1'e_2'e_3'\right \rangle.
\end{aligned}
\nonumber
\end{equation}
The prime decomposition of its radical ideal is computed as
\begin{equation}
    \begin{aligned}
    \sqrt{\K^\prime_2}=&\left \langle e_1^\prime \right \rangle\cap\left \langle e_2^\prime \right \rangle\cap\left \langle e_3^\prime \right \rangle\cap \\
    &\left \langle 5 e_1'^{2} - 2 e_1' e_2' + 5 e_2'^{2} - 245 \right \rangle\cap\left \langle 5 e_1'^{2} + 2 e_1' e_2' + 5 e_2'^{2} - 245 \right \rangle\cap\\
    &\left \langle  35 e_1'^{2} - 38 e_1' e_3' + 35 e_3'^{2} - 1260\right \rangle \cap\left \langle  35 e_1'^{2} + 38 e_1' e_3' + 35 e_3'^{2} - 1260\right \rangle\cap\\
    &\left \langle 7 e_2'^{2} - 10 e_2' e_3' + 7 e_3'^{2} - 175  \right \rangle \cap \left \langle 7 e_2'^{2} + 10 e_2' e_3' + 7 e_3'^{2} - 175  \right \rangle\cap\\
    &\left \langle 25 e_1'^{4}- 12 e_1'^{2} e_2'^{2}- 38 e_1'^{2} e_3'^{2}+ 36 e_2'^{4}- 60 e_2'^{2} e_3'^{2}+ 49 e_3'^{4}- 44100 \right \rangle\cap \\
    &\left \langle 57624e_1'^{8}e_2'^{8}-141120e_1'^{8}e_2'^{6}e_3'^{2}+171072e_1'^{8}e_2'^{4}e_3'^{4}-103680e_1'^{8}e_2'^{2}e_3'^{6}\right. \\
    &\left.+31104e_1'^{8}e_3'^{8} +\cdots\text{98 terms} \cdots+3560485217625000e_3'^{2} - 26265874556250000\right \rangle.
\end{aligned}
\nonumber
\end{equation}

Clearly, the first three ideals correspond to trivial degenerate cases. The following six ideals geometrically correspond to three apple surfaces and three lemon surfaces associated with three sides of $\triangle ABC$, which are the Zariski closure of six arcs in $\pi(\V(\F+\J_{1}))$.
The tenth ideal actually corresponds to the danger cylinder of $\triangle ABC$. Consequently,
\begin{equation}
    \begin{aligned}
    \mathcal{V}^\prime_{\ge 2}=&\V\left(\left \langle C \right \rangle\right) \\
    =&\V\left(\left \langle 57624e_1'^{8}e_2'^{8}-141120e_1'^{8}e_2'^{6}e_3'^{2}+171072e_1'^{8}e_2'^{4}e_3'^{4}-103680e_1'^{8}e_2'^{2}e_3'^{6}\right.\right. \\
    &\left.+31104e_1'^{8}e_3'^{8} +\cdots\text{98 terms}\cdots+3560485217625000e_3'^{2} - 26265874556250000 \right \rangle).
\end{aligned}
\nonumber
\end{equation}
\begin{rmk}
    We choose the general acute triangle as a representative case to investigate the geometry of $\mathcal{V}^\prime_{\ge 2}$ and $\mathcal{V}^\prime_{\ge 3}$. The complete case study including six other representative triangles is provided in \ref{app Case Study}.
\end{rmk}

\subsubsection{\texorpdfstring{Interpreting $\mathcal{V}^\prime_{\ge 2}$}{}}

We continue to use the coordinate system (\ref{coordinate}) with $(s_{12},s_{13},s_{23})=(7,6,5)$. By the same transformation (\ref{edge}) for substituting $(e_1^\prime, e_2^\prime, e_3^\prime)$ in $C$, we derive a polynomial with total degree $12$ as
\begin{equation}
    \begin{aligned}
    D=&1719926784X^{10} Y^{2}+1146617856 X^{10} Z^{2}+4299816960X^{8} Y^{4}+\\
    &1719926784 X^{8} Z^{4}+\cdots \text{186 terms} \cdots+185980923555840 X^{2} Y^{2}Z^{2},
\end{aligned}
\nonumber
\end{equation}
which is exactly the deltoidal surface recently studied by Rieck \cite{rieck2025} and Wang et al. \cite{10.1007/s10851-021-01062-y}. Please see these works for the relationship between the deltoidal surface and the multi-solution phenomenon of the P3P problem. Here we investigate the geometric relationship between the danger cylinder and the deltoidal surface.
\begin{figure}[H]
\centering
\includegraphics[scale=0.15]{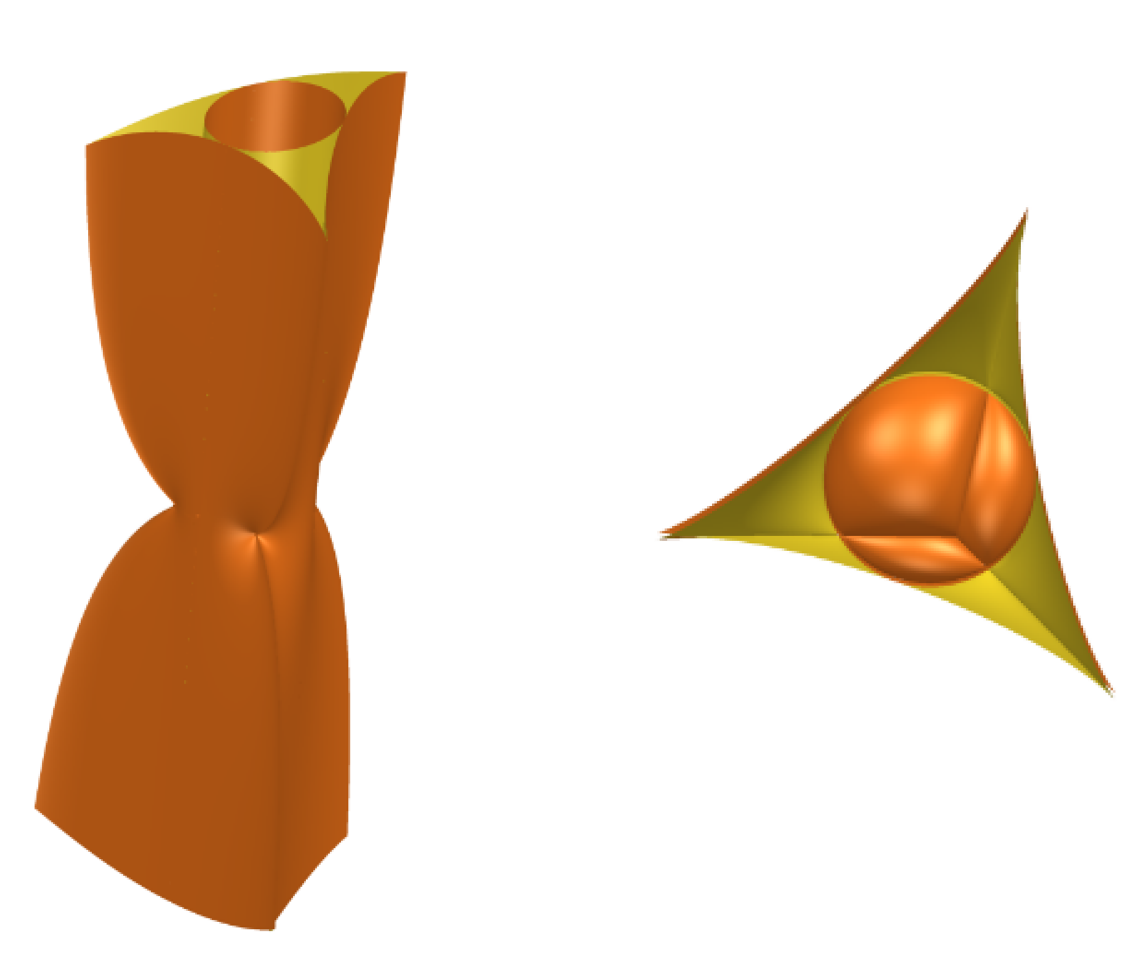}

\caption{The geometry of $\mathcal{V}'_{\geq2}$ and $\mathcal{V}'_{\geq3}$.}
\label{DSDC}
\end{figure}

\begin{thm}\label{deltoidal}
    When the singular configuration $O$ lies on the danger cylinder, its complementary solution $O'$ lies on a deltoidal surface. Moreover, the deltoidal surface is tangent to the danger cylinder along three Morley generatrices and intersects the danger cylinder at the circumcircle (see Figure~\ref{DSDC}).
\end{thm}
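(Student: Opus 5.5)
The first claim follows from the elimination already carried out. By Lemma~\ref{projection map}, $\V(\K_2^\prime)$ is the Zariski closure of the projection of $\V(\F+\J_1)$. Hence every complementary solution $(e_1^\prime,e_2^\prime,e_3^\prime)$ of a singular configuration on the danger cylinder lies in one of the irreducible components of $\sqrt{\K_2^\prime}$ listed above. The components $\langle e_i^\prime\rangle$ are degenerate. The six quadrics are apple/lemon surfaces, and I would check that they arise only from the degenerate branches in which $O^\prime$ coincides with $O$ along a side, by intersecting each with $\F+\J_1$ and testing saturation against $S_{\triangle ABC}$ via Lemma~\ref{Ideal contain}. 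The quartic is the danger cylinder itself, i.e.\ the solution $O$, which is its own "complement". This leaves $\langle C\rangle$. Substituting (\ref{edge}) for the primed distances turns $C$ into the degree-$12$ polynomial $D$ in $(X,Y,Z)$. Comparing $D$ with the deltoidal surface equation of \cite{rieck2025,10.1007/s10851-021-01062-y} (same coordinate frame (\ref{coordinate})) up to a constant gives $O^\prime\in\V(D)$.

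For the incidence statements I would use the parametrization of the danger cylinder from Section~\ref{sec:vI1}, namely $P=(\tfrac{x_2}{2}+R\cos\theta,\ \tfrac{x_3^2+y_3^2-x_2x_3}{2y_3}+R\sin\theta,\ 0)$ with height $h$. I would substitute it into $D$ and obtain $g(\theta,h):=D|_{\text{cylinder}}$, a polynomial in $\cos\theta,\sin\theta,h$ reduced modulo $f_5$.

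I would then factor $g$ over $\mathbb{Q}$, with the triangle $(7,6,5)$ first and symbolically if feasible. The expected outcome is $g=h^{2k}\cdot f_1(\sin\theta,\cos\theta)^2\cdot(\text{unit or factor with no real zeros on the cylinder})$. The factor $h=0$ gives the circumcircle as part of the intersection. The squared factor $f_1^2$ gives the three Morley generatrices with even contact, hence tangency. To turn "squared factor" into genuine tangency, I would also verify directly that $\nabla D$ is parallel to the cylinder normal along $\V(\I_1)$. Concretely, the $2\times2$ minors of $[\nabla D,\ \nabla(\text{cylinder})]$ should lie in $\sqrt{\I_1\text{-image}}$, which I would check by a saturation test using Lemma~\ref{Ideal contain}. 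I would also check that along the circumcircle the intersection is transversal or at least not contained in a tangency locus, so that it is described as an intersection and not a tangency.

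The main obstacle is the factorization step. The first difficulty is degree: $D$ has degree $12$ and the trigonometric substitution inflates it. The second is that $f_1$ involves $\sin\theta$ cubically with the Morley angle, so over $\mathbb{Q}$ the three generatrices appear as one irreducible factor only after eliminating $\cos\theta$ with $f_5$. I would first try the numerical triangle, computing a resultant of $g$ and $f_5$ with respect to $\cos\theta$ and factoring in $\sin\theta,h$. I would then compare with $f_1$ at $(7,6,5)$ and repeat this for the six other representative triangles of \ref{app Case Study}, since a fully symbolic proof may be out of reach in the same way $\K_2^\prime$ was.
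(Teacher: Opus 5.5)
Your route to the first assertion matches the paper: elimination, discarding components, and identifying $D$ with the known deltoidal surface. The incidence part has a genuine gap, though. You read tangency off even multiplicities that your set-up produces automatically.

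The cubic $f_1$ involves only $\sin\theta$, so $\{f_1=0\}$ is invariant under $\theta\mapsto\pi-\theta$. Since $f_1=0$ means $\sin3\theta=\cos(\angle A-\angle B)$, a triangle with $s_{13}\neq s_{23}$, such as $(7,6,5)$, gives six generatrices: the Morley ones and their mirror images in the perpendicular bisector of $AB$. So $f_1^2$ is not what you will find in $g$. For simple tangency the relevant factor is $1-\cos3(\theta-\theta_0)=2\sin^2\tfrac32(\theta-\theta_0)$, with $\theta_0$ the angle of one Morley generatrix. This is not the square of any trigonometric polynomial.

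After your resultant you will nevertheless see $f_1^2$. That is only because $\mathrm{Res}_{\cos\theta}(q,f_5)=q(c,s)\,q(-c,s)$, with $s=\sin\theta$ and $c=\sqrt{1-s^2}$, multiplies every factor by its mirror image; in particular $\mathrm{Res}_{\cos\theta}(f_1,f_5)=f_1^2$. That square therefore neither certifies even contact nor tells you which of the six generatrices lie on the surface.

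The same problem affects $h$. Since $C$ is a polynomial in the $e_i'^2$, $D$ is even in $Z$, so $h$ always occurs to an even power. By your own criterion this would signal tangency along the circumcircle, and in Cartesian coordinates it really is tangency. At a smooth point of $\V(D)$ on the circle $Z=0$ one has $\partial_Z D=0$, and $\nabla D$ is orthogonal to the circle. So $\nabla D$ is parallel to the cylinder normal, and the transversality check you plan along the circumcircle must fail in $(X,Y,Z)$.

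The paper avoids all of this by never parametrizing. It works in the distance coordinates, where $(X,Y,Z)\mapsto(e_1,e_2,e_3)$ folds $\pm Z$ together. With $\mathcal{G}=\langle\nabla\J_1\times\nabla C\rangle+\J_1$, the saturations $1\in\mathcal{G}:\I_1^\infty$ and $1\in\I_1:\mathcal{G}^\infty$ give $\V(\mathcal{G})=\V(\I_1)$. Your gradient check corresponds to the second saturation. The first, $\V(\mathcal{G})\subseteq\V(\I_1)$, is what keeps the circumcircle out of the tangency locus. Two more saturations give $\V(\langle C\rangle+\J_1)=\V(\I_1\cap\I_2)$, which is the set-theoretic intersection, with no factor multiplicities involved.

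If you want to keep a parametrized argument, make two substitutions. Use $W=h^2$ in place of $h$, and $t=\tan(\theta/2)$ in place of $(\cos\theta,\sin\theta)$, so that you factor in a polynomial ring. Then a factor $p(t)^2$, with $p$ the cubic vanishing at the three Morley values of $t$, does certify tangency. A simple factor $W$ corresponds to the transversal meeting along the circumcircle seen in the $(e_1,e_2,e_3)$ chart.

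A smaller point: the six quadrics are not branches where $O'$ meets $O$; they are fibres over $O$ at a vertex. For example, take $O=C$, i.e. $(e_1,e_2,e_3)=(6,5,0)\in\V(\J_1)$. Then $f_1,f_2$ vanish identically and $f_3$ becomes $6(5e_1'^2-2e_1'e_2'+5e_2'^2-245)$.
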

\begin{proof}
   Let $\mathcal{G}=\left \langle \nabla \J_1 \times\nabla C\right \rangle+ \J_1 $, then the following computation
    \begin{equation}
        1\in \mathcal{G}:\I_1^\infty~~\text{and}~~1\in \I_1:\mathcal{G}^\infty
        \nonumber
    \end{equation}
    imply $\V(\mathcal{G})=\V(\I_1)$, which concludes the tangency statement.

    Let $\mathcal{G}=\left \langle C \right \rangle+\J_1 $, $\mathcal{H}=\I_1\cap \I_2$, then the following computation
    \begin{equation}
        1\in \mathcal{G} : {\mathcal{H}}^{\infty}~~\text{and}~~1\in \mathcal{H}:{\mathcal{G}}^{\infty}
        \nonumber
    \end{equation}
    imply $\V(\mathcal{G})=\V(\mathcal{H})$, i.e.,$\V(\mathcal{G})=\V(\I_1)\cup\V(\I_2)$, which concludes the intersection statement.
\end{proof}

In other words, it is impossible to have two double P3P solutions.

\subsection{\texorpdfstring{$\mathcal{V}^\prime_{\ge 3}\subseteq\V(\K'_3)$}{}}
Since $\mathcal{V}_{\ge 3}=\V(\I_1)\cup\V(\I_2)$ and $\V(\I_2)$ correspond to infinitely many solutions of (\ref{tetra_form}), let $\K_3=\I_1$ and compute
\begin{equation}
    \K_3^\prime=(\F+\I_1)\cap k[e_1^\prime,e_2^\prime,e_3^\prime,s_{12},s_{13},s_{23}].
    \nonumber
\end{equation}
However, even if the values of $s_{12},s_{13},s_{23}$ have been fixed, the complete decomposition of $\K^\prime_3$ is still expensive. Therefore, we compute a superset of $\mathcal{V}^\prime_{\ge 3}$ alternatively.
\begin{thm}\label{cusp}
    When the singular configuration $O$ lies on one of three Morley generatrices, its complementary solution $O'$ lies on one of three cuspidal curves of the deltoidal surface (see Figure~\ref{DSDC}).
\end{thm}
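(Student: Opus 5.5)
Since a full decomposition of $\K'_3$ is too expensive even for fixed $(s_{12},s_{13},s_{23})=(7,6,5)$, the plan is to prove a containment $\mathcal{V}'_{\ge 3}\subseteq \V(\mathcal{S})$. Here $\mathcal{S}$ is an ideal describing the singular locus of the deltoidal surface $C$; this gives the statement without computing $\K'_3$ itself. The singular locus of $\V(\langle C\rangle)$ is cut out by
\[
\mathcal{S}=\left\langle C,\ \frac{\partial C}{\partial e'_1},\ \frac{\partial C}{\partial e'_2},\ \frac{\partial C}{\partial e'_3}\right\rangle ,
\]
and the same ideal can be formed for $D$ in $(X,Y,Z)$ coordinates. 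I would first decompose $\sqrt{\mathcal{S}}$ after saturating away the trivial components $\langle e'_i\rangle$. The expected outcome is three one-dimensional prime components, namely the three cuspidal curves, each of degree compatible with a curve sitting on the order-$12$ surface. To confirm that these curves are cuspidal edges and not ordinary double curves, I would check that the Hessian of $C$ restricted to each component has rank one generically.

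Next I bound the complementary solutions. Because $\V(\I_1)\subseteq\V(\J_1)$, every complementary solution attached to a point of $\V(\I_1)$ already lies on $\V(C)$ by Theorem~\ref{deltoidal}. It therefore remains to show these points are singular on $C$. The conceptual reason is that $C$ is the image of the danger cylinder $\V(\J_1)$ under the complementary correspondence. At a triple point, the local map sending $O$ to $O'$ degenerates: the branches of complementary solutions coalesce, and this is precisely what produces a cusp in the image surface. Computationally, I would replace this argument by an elimination. Take $\F+\I_1+\langle \partial C/\partial e'_i \rangle$, or more cheaply test membership: show that each generator of $\mathcal{S}$ evaluated at $e'$ lies in $\sqrt{\F+\I_1}$ modulo saturation by the degenerate factors $e_ie_j e'_ie'_j$ and $S_{\triangle ABC}$. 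Following Lemma~\ref{Ideal contain}(ii), this amounts to verifying
\[
1\in (\F+\I_1):(g)^{\infty}\ \text{fails only trivially, i.e. } g\in\sqrt{(\F+\I_1):(e_1e_2e_3e'_1e'_2e'_3)^{\infty}}\quad \text{for } g\in\mathcal{S}.
\]

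If radical membership is too costly, I would fall back on the explicit parametrisation of $\V(\I_1)$ from Section~\ref{sec:vI1}. On each Morley generatrix, $O=(\,\cdot\,,\,\cdot\,,h)$ with $\theta$ fixed by $f_1$, so $\V(\I_1)$ is a union of three rational lines in $h$. Substituting this into $\F$ and eliminating gives, for each generatrix, a curve in $(e'_1,e'_2,e'_3)$. I would then compare these curves with the three components of $\sqrt{\mathcal{S}}$ using the mutual-saturation test $1\in \mathcal{A}:\mathcal{B}^\infty$, $1\in\mathcal{B}:\mathcal{A}^\infty$, exactly as in the proof of Theorem~\ref{deltoidal}. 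Doing this over the field extension defined by $f_1$, or over $\mathbb{Q}$ by keeping $\sin\theta$ as a variable, keeps the three generatrices separate, so each is matched to one cuspidal curve.

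The main obstacle will be the elimination step $\F+\I_1$, since $\F$ is bilinear-quadratic in six unknowns. I would try first the generatrix parametrisation, in which each piece becomes a one-parameter problem, and only then resort to membership testing in $\sqrt{\mathcal{S}}$ at sampled rational points along $h$. For the other triangle types I would repeat the same computation using the case study in the appendix.
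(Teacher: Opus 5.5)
Your strategy is the paper's own. The paper's $\mathcal{C}$ is your $\mathcal{S}$, and the paper certifies $1\in\K'_3:\mathcal{C}^\infty$. Since $\mathcal{C}\subseteq k[e'_1,e'_2,e'_3]$ and $\sqrt{\F+\I_1}\cap k[e'_1,e'_2,e'_3]=\sqrt{\K'_3}$, that is the same statement as your membership $\mathcal{S}\subseteq\sqrt{\F+\I_1}$. Decomposing $\sqrt{\mathcal{S}}$ and checking Hessian rank are extra verification of what the paper only asserts about the singular locus.

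However, the test as you set it up would fail.

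\begin{itemize}
\item Every generator of $\F$ vanishes identically on $e'=e$ and on $e'=-e$. So $\V(\F+\I_1)$ contains $\{(e,\pm e):e\in\V(\I_1)\}$.
\item Generic points of this set have all $e_i,e'_i\neq0$. Saturating by $e_1e_2e_3e'_1e'_2e'_3$, or by the nonzero constant $S_{\triangle ABC}$, therefore does not remove it.
\item It projects onto the Morley generatrices themselves, and these are \emph{smooth} points of $C$. This is exactly the tangency in Theorem~\ref{deltoidal}.
\end{itemize}

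To see the smoothness concretely: at a triple root the cosine map is locally a Whitney cusp $(a,b,t)\mapsto(a,b,t^3+at)$, whose discriminant is $4a^3+27c^2=0$. The preimage of the discriminant is $(a+3t^2)^2(4a+3t^2)=0$. That is the critical surface (the danger cylinder) together with a smooth sheet $4a+3t^2=0$ (the deltoidal surface), tangent to it along $a=t=0$.

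Hence the $\partial C/\partial e'_i$ do not all vanish on this component. So $g\in\sqrt{(\F+\I_1):(e_1e_2e_3e'_1e'_2e'_3)^\infty}$ is false for $g=\partial C/\partial e'_i$. Your fallback has the same defect: eliminating along one generatrix returns that generatrix as well as the cuspidal curve, so the mutual-saturation match cannot succeed as stated.

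The fix is to discard these trivial solutions before testing. This is what the paper does when it discards the danger-cylinder component of $\sqrt{\K'_2}$, and its one-line certificate for $\K'_3$ has to be read with the analogous component removed. Set
\[
\F^{\ast}=\bigl((\F+\I_1):(e_1e_2e_3e'_1e'_2e'_3)^\infty\bigr):\bigl(\langle e_1-e'_1,e_2-e'_2,e_3-e'_3\rangle\cap\langle e_1+e'_1,e_2+e'_2,e_3+e'_3\rangle\bigr)^\infty
\]
and verify $1\in\F^{\ast}:\mathcal{S}^\infty$.

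Your conceptual explanation is backwards, and that is what hid this problem. The complementary branch that coalesces with $O$ at a triple point produces the tangency, not a cusp. The cusp sits at the \emph{other} complementary solution $O'$, which is a simple root. There the cosine map is a local diffeomorphism, so it identifies a neighbourhood of $O'$ on the deltoidal surface with the discriminant surface near its cuspidal edge $4a^3+27c^2=0$. That is the real reason $\mathcal{V}'_{\ge3}$ lands in the singular locus, and it is what the saturated computation detects.
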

\begin{proof}
    Let $C\in k[e_1^\prime,e_2^\prime,e_3^\prime]$, define $\mathcal{V}^\prime_{\ge 2}=\V\left(\left \langle C \right \rangle\right)$ and
    \begin{equation}
    \mathcal{C}=\left \langle C, \ \frac{\partial C}{\partial e_1^\prime},\ \frac{\partial C}{\partial e_2^\prime},\ \frac{\partial C}{\partial e_3^\prime}  \right \rangle ,
    \nonumber
\end{equation}
define the singular locus, namely, three cuspidal curves of the deltoidal surface, then the following computation
\begin{equation}
    1\in \K^\prime_3:\mathcal{C}^{\infty},
    \nonumber
\end{equation}
implies $\V(\K^\prime_3)\subseteq\V(\mathcal{C})$, which concludes $\mathcal{V}^\prime_{\ge 3}\subseteq\V(\mathcal{C})$.
\end{proof}

We perform a numerical experiment to illustrate the statement in Theorem~\ref{cusp}. Let $O$ denote a singular configuration located on a circle that is parallel to the circumcircle, and $O'$ denote its complementary configuration. As illustrated in Figure~\ref{numerical of Deltoid}, when $O$ moves along the circle inside the danger cylinder, $O'$ moves along the deltoidal surface (points of the same color are solutions of the same P3P equation system, see Figure~\ref{numerical of Deltoid}(a)). In particular, when $O$ lies on one of three Morley generatrices of the danger cylinder, $O'$ lies on one of three cuspidal curves of the deltoidal surface. Note that the trajectory of $O'$ is not planar (see Figure~\ref{numerical of Deltoid}(b)).

\begin{figure}[H]
\centering  
\subfigure[Top view]{
\includegraphics[height=0.33\textwidth]{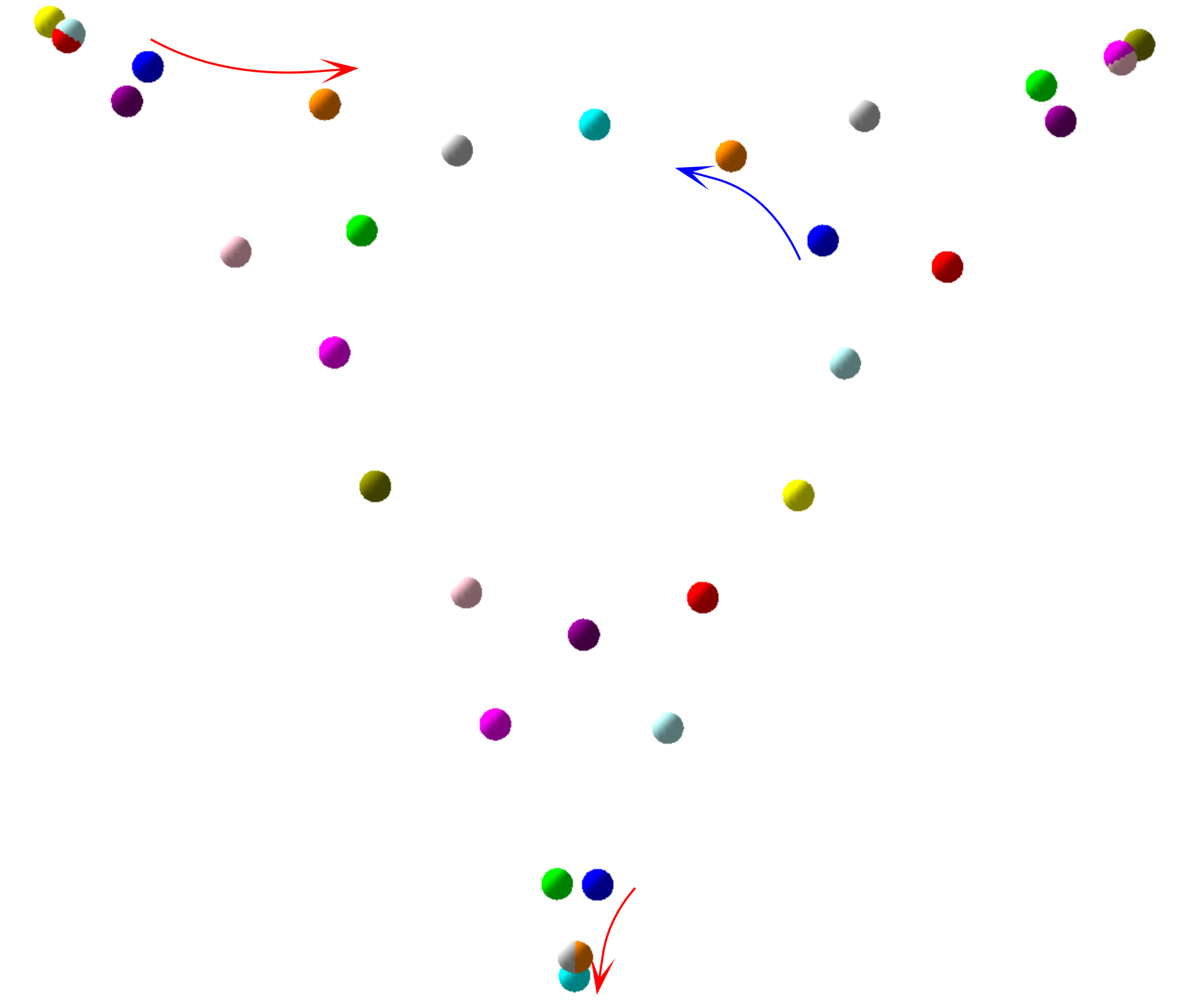}}
\subfigure[Front View]{
\includegraphics[height=0.33\textwidth]{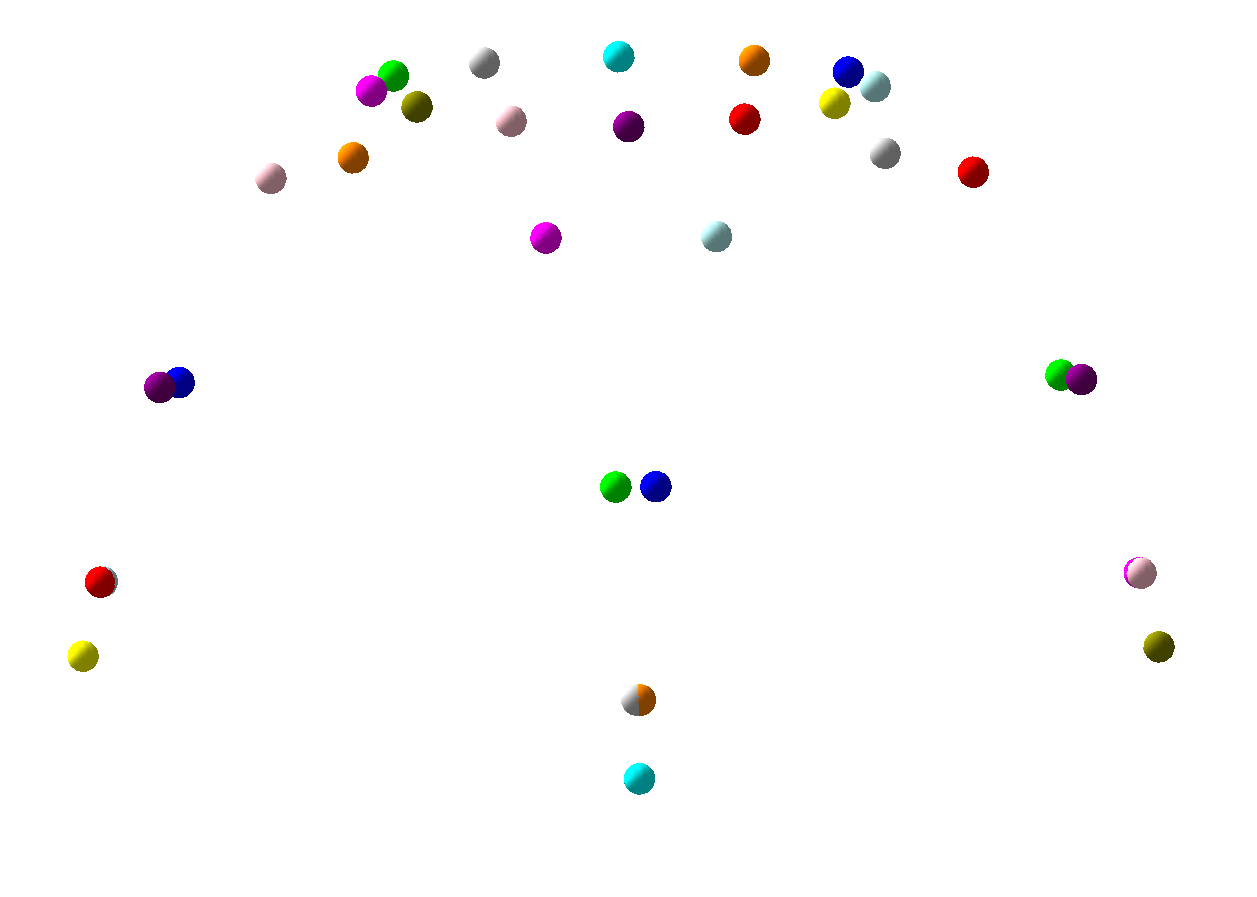}}
\caption{Numerical experiments of $\mathcal{V}_{\geq3}\subseteq\mathcal{V}_{\geq2}$ and $\mathcal{V}'_{\geq3}\subseteq\mathcal{V}'_{\geq2}$.}
\label{numerical of Deltoid}
\end{figure}

\subsection{Answer to Question~\ref{Q2}}
Recall the question: given $\triangle ABC$, suppose that the camera center $O$ corresponds to a singular solution $(e_1,e_2,e_3)$ of (\ref{tetra_form}) with multiplicity $\geq\mu$, then where are the other centers $O'$ complementary to $O$? We summarize the answer as follows.
\begin{itemize}
    \item For $O\in\mathcal{V}_{\ge 2}$, $O'$ lies on \textbf{the deltoidal surface} associated with the danger cylinder,
    \item For $O\in\mathcal{V}_{\ge 3}$, $O'$ lies on one of \textbf{three cuspidal curves} of the deltoidal surface.
\end{itemize}

Note that Theorems~\ref{deltoidal} and \ref{cusp} are true for six other representative triangles presented in \ref{app Case Study}.

\section{Conclusion and Discussion}\label{conclusion}

Using local dual space, a complete geometric stratification is given for  singular configurations of the P3P problem with respect to the multiplicity.
Using elimination and case study, we also provide a geometric stratification for complementary configurations associated with singular configurations.
Both results are summarized in Table~\ref{tab:summary_en}. Future work includes generalizing the proposed stratification framework to other PnP problems \cite{pascual2021complete}, related PnL problems \cite{jorge2022}, and analyzing kinematic singularity problems  in robotics, e.g.,  mechanism's higher-order singularities properties \cite{Nawratil24,Nawratil25}, singular configurations \cite{SPARTALIS2022102150}. This work thereby connects the singularity analysis of vision problems to the characterization required for complex robotic mechanisms.

\begin{table}[H]
  \centering
  \caption{Geometric stratification for singular configurations $O$ and complementary configurations $O'$ of the P3P problem with respect to the multiplicity $\mu$.}
  \label{tab:summary_en}
  \begin{tabular}{p{0.5cm} p{5cm} p{4.5cm}}
    \toprule
    {$\mu$} & {Location of $O$} & {Location of $O'$} \\
    \midrule
    $2$ & on  \textbf{the danger cylinder}  & on  \textbf{the deltoidal surface} \\
    \addlinespace
    $3$ & on \textbf{three Morley generatrices} of the danger cylinder  & on  \textbf{three cuspidal curves} of the deltoidal surface \\
    \addlinespace
    $4$ & \textbf{not exist}  & not defined \\
    \addlinespace
    $\infty$ & on \textbf{the circumcircle}  & not defined \\
    \bottomrule
  \end{tabular}
\end{table}

\bibliography{references}

\newpage
\appendix
\section{\texorpdfstring{Discover $\I_1$ and $\I_4$ in Section~\ref{sec:I12}}{Section 3.2.1}}
\label{app I1}
As we know from Section~\ref{sec:I12} that $\I_2,\I_3$ are degenerate cases, we would like to remove them from $\V(\sqrt{\J_1+\J_2})$. In algebraic geometry terms, removing one variety from the other amounts to finding the difference of the varieties\cite{cox1997ideals}.

The set difference of two affine varieties is generally not an affine variety but an open subset of a variety. It cannot be written as the set of solutions of a system of polynomial equations (it is not an affine variety). The smallest affine variety which contains it, is called the Zariski closure of the difference, denoted with an overline. Loosely speaking, taking the Zariski closure amounts to patching up the holes in the open set. Therefore, in this case, we need to find the following Zariski closure of the difference:
    \begin{equation}
        \V(\I_1^{\prime})=\overline{\V(\sqrt{\J_1+\J_2})\setminus (\V(\I_2)\cup\V(\I_3))}.
        \nonumber
    \end{equation}
     \begin{lem}[Corollary 11 of \S 4.4 \cite{cox1997ideals}]
       Let $\I$ and $\J$ be ideals in $k[x_1,\dots,x_n]$. If $k$ is algebraically closed and $\I$ is radical, then
       \begin{equation}
           \V(\I:\J)=\overline{\V(\I)\setminus \V(\J)}.
           \nonumber
       \end{equation}
   \end{lem}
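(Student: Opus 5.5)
The statement to prove is Corollary 11 of \S 4.4 in Cox--Little--O'Shea: for a radical ideal $\I$ over an algebraically closed field, $\V(\I:\J)=\overline{\V(\I)\setminus\V(\J)}$. I will prove the two inclusions separately, using the Nullstellensatz and the basic dictionary between ideals and varieties.

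For ``$\supseteq$'', I first show $\V(\I)\setminus\V(\J)\subseteq\V(\I:\J)$. Take a point $x\in\V(\I)\setminus\V(\J)$; then some $g\in\J$ has $g(x)\neq 0$. For any $f\in\I:\J$ we have $fg\in\I$, so $f(x)g(x)=0$, and hence $f(x)=0$. Since $\V(\I:\J)$ is Zariski closed, it contains the closure $\overline{\V(\I)\setminus\V(\J)}$.

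For ``$\subseteq$'', I use the identity $\mathbf{I}(\V(\I)\setminus\V(\J))=\I:\J$ when $\I$ is radical. The closure of any set $S$ equals $\V(\mathbf{I}(S))$, so this identity suffices. Let $f$ vanish on $\V(\I)\setminus\V(\J)$ and take any $g\in\J$. Then $fg$ vanishes on all of $\V(\I)$: off $\V(\J)$ it is zero because $f$ is, and on $\V(\J)$ it is zero because $g$ is. By the Nullstellensatz, $fg\in\sqrt{\I}=\I$, since $\I$ is radical and $k$ is algebraically closed. Hence $f\in\I:\J$, so $\mathbf{I}(\V(\I)\setminus\V(\J))\subseteq\I:\J$. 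Taking $\V$ reverses inclusions, giving $\V(\I:\J)\subseteq\V(\mathbf{I}(\V(\I)\setminus\V(\J)))=\overline{\V(\I)\setminus\V(\J)}$.

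The only delicate step is the appeal to the Nullstellensatz together with radicality, which turns ``$fg$ vanishes on $\V(\I)$'' into ``$fg\in\I$''. This is exactly where both hypotheses are needed, and I expect no further obstacle. I will also record the fact $\overline{S}=\V(\mathbf{I}(S))$ explicitly: $\V(\mathbf{I}(S))$ is closed and contains $S$, and any closed $\V(K)\supseteq S$ satisfies $K\subseteq\mathbf{I}(S)$, so $\V(\mathbf{I}(S))\subseteq\V(K)$.
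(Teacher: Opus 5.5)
Your proof is correct. Both inclusions hold, and the Nullstellensatz step ($fg$ vanishes on $\V(\I)$, hence $fg\in\sqrt{\I}=\I$) is exactly where algebraic closedness and radicality are used.

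The paper gives no proof of its own and simply cites Cox--Little--O'Shea. There the result rests on the same Nullstellensatz argument, but as far as I recall it is packaged differently. The textbook first establishes the saturation version $\V(\I:\J^\infty)=\overline{\V(\I)\setminus\V(\J)}$, which needs no radicality. It then specializes using $\I:\J=\I:\J^\infty$ for radical $\I$.

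Your direct route applies radicality immediately and never introduces saturations. It is shorter, but it does not yield that more general statement.
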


   Therefore, by applying the lemma above, the quotient ideal $\sqrt{\J_1+\J_2}:(\I_2\cap\I_3)$ can be computed to obtain $\I^{\prime}_1$:

   \begin{equation}
    \begin{aligned}
    \I_1^{\prime}=&\left \langle e_{1}^{4} s_{12}^{2}-e_{1}^{4}s_{13}^{2}+2 e_{1}^{2} e_{2}^{2} s_{13}^{2}-2 e_{1}^{2} e_{3}^{2} s_{12}^{2}-e_{2}^{4} s_{13}^{2}+e_{2}^{2} s_{12}^{2} s_{13}^{2}+e_{3}^{4} s_{12}^{2}-e_{3}^{2}s_{12}^{2} s_{13}^{2}\ ,
    \right.\\
    &\left. -e_{1}^{4} s_{13}s_{23}^{2}+2 e_{1}^{2} e_{2}^{2} s_{13}s_{23}^{2}+e_{1}^{2} s_{12}^{2} s_{13} s_{23}^{2}+e_{2}^{4} s_{12}^{2} s_{13}
    \right.\\
    &\left.-e_{2}^{4} s_{13} s_{23}^{2}-2 e_{2}^{2} e_{3}^{2} s_{12}^{2} s_{13}+e_{3}^{4} s_{12}^{2} s_{13}-e_{3}^{2}s_{12}^{2} s_{13} s_{23}^{2}\ ,
    \right.\\
    &\left.e_{1}^{4} s_{12} s_{23}^{2}-2 e_{1}^{2} e_{3}^{2} s_{12} s_{23}^{2}-e_{3}^{4} s_{12} s_{13}^{2} +e_{3}^{4} s_{12} s_{23}^{2}\ ,\right. \\
    &\left. -e_{1}^{2}s_{12} s_{13}^{2} s_{23}^{2}-s_{12} e_{2}^{4} s_{13}^{2}+2 e_{2}^{2} e_{3}^{2} s_{12} s_{13}^{2}+e_{2}^{2} s_{12} s_{13}^{2} s_{23}^{2}-e_{1}^{4} s_{23}^{2}-e_{1}^{2} e_{2}^{2} s_{12}^{2}\right.\\
    &\left. +e_{1}^{2} e_{2}^{2} s_{13}^{2}+e_{1}^{2} e_{2}^{2} s_{23}^{2}+e_{1}^{2} e_{3}^{2} s_{12}^{2}-e_{1}^{2} e_{3}^{2} s_{13}^{2}+e_{1}^{2} e_{3}^{2} s_{23}^{2}-e_{2}^{4} s_{13}^{2}+e_{2}^{2} s_{12}^{2} e_{3}^{2} \right.\\
    &\left.+e_{2}^{2} e_{3}^{2} s_{13}^{2}-e_{2}^{2}e_{3}^{2} s_{23}^{2}-e_{3}^{4} s_{12}^{2}+s_{12}^{2} s_{13}^{2} s_{23}^{2}\ , \right.\\
    &\left.e_{1}^{6} s_{23}^{2}-2 e_{1}^{4} e_{2}^{2} s_{23}^{2}-e_{1}^{4} e_{3}^{2} s_{23}^{2}-e_{1}^{4} s_{12}^{2} s_{23}^{2}-e_{1}^{2} e_{2}^{4} s_{12}^{2}+e_{1}^{2}e_{2}^{4} s_{23}^{2}\right.\\
    &\left.+2 e_{1}^{2} e_{2}^{2} e_{3}^{2} s_{12}^{2}+2 e_{1}^{2} e_{2}^{2} e_{3}^{2} s_{23}^{2}-e_{1}^{2} e_{3}^{4} s_{12}^{2}+2 e_{1}^{2} e_{3}^{2} s_{12}^{2} s_{23}^{2}+e_{2}^{4} e_{3}^{2} s_{12}^{2}\right. \\
    &\left. -e_{2}^{4} e_{3}^{2} s_{23}^{2}-2 e_{2}^{2} e_{3}^{4} s_{12}^{2}+e_{3}^{6} s_{12}^{2}-e_{3}^{4} s_{12}^{2}s_{23}^{2}\ , \right.\\
    &\left.e_1^{6} s_{23}^{2}+e_1^{4} e_2^{2} s_{23}^{2}+2 e_1^{4} e_3^{2} s_{23}^{2}+e_1^{4} s_{13}^{2} s_{23}^{2}+e_1^{2} e_2^{4} s_{13}^{2}-2 e_1^{2} e_2^{2} e_3^{2} s_{13}^{2}\right. \\
    &\left.-2 e_1^{2} e_2^{2} e_3^{2} s_{23}^{2}-2 e_1^{2} e_2^{2} s_{23}^{2} s_{13}^{2}+e_1^{2} e_3^{4} s_{13}^{2}-e_1^{2} e_3^{4} s_{23}^{2}-e_2^{6} s_{13}^{2}+2 e_2^{4} e_3^{2} s_{13}^{2}\right.\\
    &\left.+e_2^{4} s_{13}^{2} s_{23}^{2}-e_2^{2} e_3^{4} s_{13}^{2}+e_2^{2} e_3^{4} s_{23}^{2}
    \right \rangle.
\end{aligned}
\nonumber
\end{equation}

$\I^{\prime}_1$ is a radical ideal, and its prime decomposition yields:
\begin{equation}
    \begin{aligned}
    \I_1^{\prime}=&\left \langle s_{12},-e_1+e_2\right \rangle  \cap \left \langle s_{12},e_1+e_2 \right \rangle \cap \left \langle s_{13}, -e_1+e_3 \right \rangle \\
    &\left \langle s_{13},e_1+e_3 \right \rangle \cap \left \langle s_{12}, s_{13}, s_{23} \right \rangle \cap \left \langle -e_1^{4} s_{12}^{2}+e_1^{4} s_{13}^{2}-2 e_1^{2} e_2^{2} s_{13}^{2}\right.\\
    &\left.+2 e_1^{2} e_3^{2} s_{12}^{2}+e_2^{4} s_{13}^{2}-e_2^{2} s_{12}^{2} s_{13}^{2}-e_3^{4} s_{12}^{2}+e_3^{2} s_{12}^{2} s_{13}^{2}\ , \right.\\
    &\left.e_1^{4} s_{23}^{2}-2 e_1^{2} e_2^{2} s_{23}^{2}-e_1^{2} s_{12}^{2} s_{23}^{2}-e_2^{4} s_{12}^{2}+e_2^{4} s_{23}^{2}+2 e_2^{2} s_{12}^{2} e_3^{2}-e_3^{4} s_{12}^{2}+e_3^{2} s_{12}^{2} s_{23}^{2}\ , \right.\\
    &\left.e_1^{4} s_{23}^{2}-2 e_1^{2} e_3^{2} s_{23}^{2}-s_{13}^{2} e_1^{2} s_{23}^{2}-e_2^{4} s_{13}^{2}+2 e_2^{2} e_3^{2} s_{13}^{2}+e_2^{2} s_{13}^{2} s_{23}^{2}-e_3^{4} s_{13}^{2}+e_3^{4} s_{23}^{2}\ , \right.\\
    &\left.-e_1^{4} s_{23}^{2}-e_1^{2} e_2^{2} s_{12}^{2}+e_1^{2} e_2^{2} s_{13}^{2}+e_1^{2} e_2^{2} s_{23}^{2}+e_1^{2} e_3^{2} s_{12}^{2}-e_1^{2} e_3^{2} s_{13}^{2}+e_1^{2} e_3^{2} s_{23}^{2}-e_2^{4} s_{13}^{2}\right.\\
    &\left.+e_2^{2} s_{12}^{2} e_3^{2}+e_2^{2} e_3^{2} s_{13}^{2}-e_2^{2} e_3^{2} s_{23}^{2}-e_3^{4} s_{12}^{2}+s_{12}^{2} s_{13}^{2} s_{23}^{2} \right \rangle ,
\end{aligned}
\nonumber
\end{equation}
   Since $s_{12}\neq 0,s_{13}\neq 0,s_{23}\neq 0$, we eliminate the degenerate cases from it to obtain $\I_1$:

   \begin{equation}
    \begin{aligned}
    \I_1=
    &\left \langle -e_1^{4} s_{12}^{2}+e_1^{4} s_{13}^{2}-2 e_1^{2} e_2^{2} s_{13}^{2}+2 e_1^{2} e_3^{2} s_{12}^{2}+e_2^{4} s_{13}^{2}-e_2^{2} s_{12}^{2} s_{13}^{2}-e_3^{4} s_{12}^{2}+e_3^{2} s_{12}^{2} s_{13}^{2}\ ,\right.\\
    &\left. e_1^{4} s_{23}^{2}-2 e_1^{2} e_2^{2} s_{23}^{2}-e_1^{2} s_{12}^{2} s_{23}^{2}-e_2^{4} s_{12}^{2}+e_2^{4} s_{23}^{2}+2 e_2^{2} s_{12}^{2} e_3^{2}-e_3^{4} s_{12}^{2}+e_3^{2} s_{12}^{2} s_{23}^{2}\ , \right.\\
    &\left.e_1^{4} s_{23}^{2}-2 e_1^{2} e_3^{2} s_{23}^{2}-s_{13}^{2} e_1^{2} s_{23}^{2}-e_2^{4} s_{13}^{2}+2 e_2^{2} e_3^{2} s_{13}^{2}+e_2^{2} s_{13}^{2} s_{23}^{2}-e_3^{4} s_{13}^{2}+e_3^{4} s_{23}^{2}\ ,\right.\\
    &\left.-e_1^{4} s_{23}^{2}-e_1^{2} e_2^{2} s_{12}^{2}+e_1^{2} e_2^{2} s_{13}^{2}+e_1^{2} e_2^{2} s_{23}^{2}+e_1^{2} e_3^{2} s_{12}^{2}-e_1^{2} e_3^{2} s_{13}^{2}+e_1^{2} e_3^{2} s_{23}^{2}-e_2^{4} s_{13}^{2}\right.\\
    &\left.+e_2^{2} s_{12}^{2} e_3^{2}+e_2^{2} e_3^{2} s_{13}^{2}-e_2^{2} e_3^{2} s_{23}^{2}-e_3^{4} s_{12}^{2}+s_{12}^{2} s_{13}^{2} s_{23}^{2} \right \rangle.
\end{aligned}
\nonumber
\end{equation}

Denote the intersection of the remaining ideals as $\I_4$.

\section{\texorpdfstring{Prove Morley's theorem and $f_1$ in Section~\ref{sec:vI1}}{in Section 3.2.2}}
\label{app Morley}


\subsubsection*{Morley's theorem}
\begin{proof}
Let $\triangle ABC$ be an arbitrary triangle with interior angles $3\alpha,3\beta,3\gamma$ satisfying $\alpha+\beta+\gamma=\frac{\pi}{3}$. The first Morley triangle $\triangle DEF$ is formed by the intersections of the internal angle trisectors adjacent to the sides, as shown in Figure~\ref{Morley triangle}(a).

Consider triangle $\triangle ADE$. By applying the Law of Sines successively in triangles $\triangle ABD, \triangle ABC$ and $\triangle AEC$, we derive the ratio:
According to the Law of Sines, we have:
\leavevmode
    \begin{align}
        \frac{AD}{AE}&=\frac{AD}{AB}\cdot \frac{AB}{AC}\cdot \frac{AC}{AE}\nonumber\\
        &=\frac{\sin{\gamma}}{\sin{(\alpha+\gamma)}}\cdot \frac{\sin{(3\beta)}}{\sin{(3\gamma)}}\cdot \frac{\sin{(\alpha+\beta)}}{\sin{\beta}} \nonumber\\
        &=\frac{\sin{(\frac{\pi}{3}+\beta)}}{\sin{(\frac{\pi}{3}+\gamma)}} \nonumber,
    \end{align}
    This implies that the angles of triangle $\triangle ADE$ satisfy:
    \begin{equation}
      \angle AED=\frac{\pi}{3}+\beta,\quad \angle ADE=\frac{\pi}{3}+\gamma .
      \nonumber
    \end{equation}

    Now, consider the angles around point $D$ in the Morley triangle configuration:
    \begin{equation}
        \angle DEF=2\pi-(\pi-\alpha-\beta)-(\frac{\pi}{3}+\beta)-(\frac{\pi}{3}+\alpha)=\frac{\pi}{3}.
        \nonumber
    \end{equation}

    Similarly, we can show:
    \begin{equation}
       \angle EDF=\frac{\pi}{3}, \quad  \angle DFE=\frac{\pi}{3}.
       \nonumber
    \end{equation}

    Therefore, all interior angles of $\triangle DEF$ equal $\frac{\pi}{3}$, proving that the Morley triangle is equilateral.
\end{proof}
\subsubsection*{Derivation of $f_1$}
As shown in Fig.\ref{Morley triangle}(b), denote the angle between $l_2$ and $DE$ as $\psi$. Angle derivation within the Morley triangle yields:
\begin{align}
    \psi&=\pi-2\beta-(\frac{\pi}{3}+\gamma)-\frac{\pi}{3} \nonumber \\
    &=\alpha-\beta \nonumber ,
\end{align}
Thus we obtain:
\begin{equation}
\theta=\psi+\frac{\pi}{6}=\alpha-\beta+\frac{\pi}{6}.
\nonumber
\end{equation}

The trigonometric expression for $\sin{(3\theta)}$ is computed as follows:
\begin{align}
    \sin{(3\theta)}&=\sin{(3\alpha-3\beta+\frac{\pi}{2})} \nonumber \\
    &=\cos{(3\alpha-3\beta)} \nonumber \\
    &=\cos{(3\alpha)}\cos{(3\beta)}+\sin{(3\alpha)}\sin{(3\beta)}, \nonumber
\end{align}
Applying the Law of Sines and the Law of Cosines to triangle $\triangle ABC$, we obtain the explicit form:
\begin{equation}
    \sin{(3\theta)}=\frac{-s_{13}^{4} + s_{12}^{2}s_{13}^{2} + 2s_{23}^{2}s_{13}^{2} + s_{12}^{2}s_{23}^{2} - s_{23}^{4}}{2s_{12}^{2}s_{13}s_{23}}, \nonumber
\end{equation}
Consequently, a cubic equation in $\sin{\theta}$ is obtained:
\begin{equation}
    8s_{12}^{2}s_{13}s_{23}\sin^3{\theta}-6s_{12}^{2}s_{13}s_{23}\sin{\theta}-s_{13}^{4} + s_{12}^{2}s_{13}^{2} + 2s_{23}^{2}s_{13}^{2} + s_{12}^{2}s_{23}^{2} - s_{23}^{4}=0.
    \nonumber
\end{equation}

\section{Case Study for Other Six Representative Triangles}
\label{app Case Study}
\subsection{Equilateral Triangle}
Let $(s_{12},s_{13},s_{23})=(1,1,1)$ define the case $\triangle ABC$ of equilateral triangle, then $\K^\prime_2$ is computed as
\begin{equation}
    \begin{aligned}
    \K^\prime_2=&\left \langle 3 e_1'^{25} e_2'^{13} e_3' - 3 e_1'^{25} e_2'^{11} e_3'^{3} + 6 e_1'^{25} e_2'^{9} e_3'^{5} - 3 e_1'^{25} e_2'^{7} e_3'^{7} + 6 e_1'^{25} e_2'^{5} e_3'^{9}\right.  \\
    &\left.+ 8 e_1'^{3} e_2' e_3' + 8 e_1' e_2'^{3} e_3'+\cdots \text{1144 terms} \cdots + 8 e_1' e_2' e_3'^{3} - e_1' e_2' e_3'\right \rangle.
\end{aligned}
\nonumber
\end{equation}
The prime decomposition of its radical ideal is computed as
\begin{equation}
    \begin{aligned}
    \sqrt{\K^\prime_2}=&\left \langle e_1^\prime \right \rangle\cap\left \langle e_2^\prime \right \rangle\cap\left \langle e_3^\prime \right \rangle\cap\\
    &\left \langle e_1'^{2} - e_2' e_1' + e_2'^{2} - 1\right \rangle\cap\left \langle e_1'^{2} + e_1' e_2' + e_2'^{2} - 1 \right \rangle \cap  \\
    &\left \langle e_1'^{2} - e_1' e_3' + e_3'^{2} - 1 \right \rangle \cap\left \langle e_1'^{2} + e_1' e_3' + e_3'^{2} - 1 \right \rangle \cap \\
    &\left \langle e_2'^{2} - e_2' e_3' + e_3'^{2}- 1 \right \rangle\cap\left \langle e_2'^{2} + e_2' e_3' + e_3'^{2} - 1 \right \rangle \cap\\
    &\left \langle e_1'^{4} - e_1'^{2} e_2'^{2} - e_1'^{2} e_3'^{2} + e_2'^{4} - e_2'^{2} e_3'^{2}+ e_3'^{4} - 1 \right \rangle \cap\\
    & \left \langle 3 e_1'^{8} e_2'^{8} - 6 e_1'^{8} e_2'^{6} e_3'^{2} + 9 e_1'^{8} e_2'^{4} e_3'^{4} - 6 e_1'^{8} e_2'^{2} e_3'^{6} \right.  \\
    &\left. + 3 e_1'^{8} e_3'^{8}+\cdots \text{83 terms} \cdots -6 e_3'^{4} + 4 e_1'^{2} + 4 e_2'^{2} + 4 e_3'^{2} - 1\right \rangle .
\end{aligned}
\nonumber
\end{equation}

Remove the degenerate cases, it follows that:
\begin{equation}
    \begin{aligned}
    \mathcal{V}^\prime_{\ge2}=&\V(\left \langle C \right \rangle)\\
    =&\V( \left \langle 3 e_1'^{8} e_2'^{8} - 6 e_1'^{8} e_2'^{6} e_3'^{2} + 9 e_1'^{8} e_2'^{4} e_3'^{4} - 6 e_1'^{8} e_2'^{2} e_3'^{6} \right.  \\
    &\left. + 3 e_1'^{8} e_3'^{8}+\cdots \text{83 terms} \cdots -6 e_3'^{4} + 4 e_1'^{2} + 4 e_2'^{2} + 4 e_3'^{2} - 1\right \rangle ).
\end{aligned}
\nonumber
\end{equation}

\subsection{Isosceles Right Triangle}
Let $(s_{12},s_{13},s_{23})=(\sqrt{2},1,1)$ define the case $\triangle ABC$ of isosceles right triangle, then $\K^\prime_2$ is computed as
\begin{equation}
    \begin{aligned}
    \K^\prime_2=&\left \langle4 e_1'^{23} e_2'^{13} e_3' - 8 e_1'^{23} e_2'^{11} e_3'^{3} + 12 e_1'^{23} e_2'^{9} e_3'^{5} - 12 e_1'^{23} e_2'^{7} e_3'^{7} + 9 e_1'^{23} e_2'^{5} e_3'^{9}\right. \\
    &\left.- 176 e_1' e_2'^{3} e_3'+\cdots \text{977 terms}\cdots - 256 e_1' e_2' e_3'^{3} + 32 e_1' e_2' e_3'\right \rangle .
\end{aligned}
\nonumber
\end{equation}
The prime decomposition of its radical ideal is computed as
\begin{equation}
    \begin{aligned}
    \sqrt{\K^\prime_2}=&\left \langle e_1^\prime \right \rangle\cap\left \langle e_2^\prime \right \rangle\cap\left \langle e_3^\prime \right \rangle\cap\\
    &\left\langle e_1'^{2} + e_2'^{2} - 2 \right \rangle \cap\\
    &\left \langle  e_1'^{4} + e_3'^{4} - 2e_1'^{2} - 2e_3'^{2} + 1\right \rangle\cap \left \langle e_2'^{4} + e_3'^{4} - 2e_2'^{2} - 2e_3'^{2} + 1 \right \rangle\cap\\
    &\left \langle e_1'^{4} - 2e_1'^{2}e_3'^{2} + e_2'^{4} - 2e_2'^{2}e_3'^{2} + 2e_3'^{4}- 2 \right \rangle\cap \\
    & \left \langle 4 e_1'^{8} e_2'^{8}- 8e_1'^{8} e_2'^{6} e_3'^{2}+ 8 e_1'^{8} e_2'^{4} e_3'^{4}- 4 e_1'^{8} e_2'^{2} e_3'^{6}+ e_1'^{8} e_3'^{8}\right. \\
    &\left.-24 e_3'^{4}+\cdots \text{73 terms}\cdots  + 12 e_1'^{2} + 12 e_2'^{2} + 16 e_3'^{2} - 4\right \rangle .
\end{aligned}
\nonumber
\end{equation}

Remove the degenerate cases, it follows that:
\begin{equation}
    \begin{aligned}
    \mathcal{V}^\prime_{\ge2}=&\V(\left \langle C \right \rangle)\\
    =&\V(\left \langle 4 e_1'^{8} e_2'^{8}- 8e_1'^{8} e_2'^{6} e_3'^{2}+ 8 e_1'^{8} e_2'^{4} e_3'^{4}- 4 e_1'^{8} e_2'^{2} e_3'^{6}+ e_1'^{8} e_3'^{8}\right. \\
    &\left.-24 e_3'^{4}+\cdots \text{73 terms}\cdots  + 12 e_1'^{2} + 12 e_2'^{2} + 16 e_3'^{2} - 4\right \rangle ).
\end{aligned}
\nonumber
\end{equation}

\subsection{Isosceles Acute Triangle}
Let $(s_{12},s_{13},s_{23})=(4,3,3)$ define the case $\triangle ABC$ of isosceles acute triangle, then $\K^\prime_2$ is computed as
\begin{equation}
    \begin{aligned}
    \K^\prime_2=&\left \langle 680244480 e_1'^{25} e_2'^{13} e_3'
- 1209323520 e_1'^{25} e_2'^{11} e_3'^{3}
+ 1823433120e_1'^{25} e_2'^{9} e_3'^{5}
\right.  \\
&\left.- 1804537440 e_1'^{25} e_2'^{7} e_3'^{7}
+\cdots\text{1156 terms}\cdots + 1490692005 e_1'^{25} e_2'^{5} e_3'^{9}\right.  \\
&\left.-815887117849349003163992064\,e_1'e_2'e_3'\right \rangle .
\end{aligned}
\nonumber
\end{equation}
The prime decomposition of its radical ideal is computed as
\begin{equation}
    \begin{aligned}
    \sqrt{\K^\prime_2}=&\left \langle e_1^\prime \right \rangle\cap\left \langle e_2^\prime \right \rangle\cap\left \langle e_3^\prime \right \rangle\cap\\
    &\left \langle 9 e_1'^{2} - 2 e_1' e_2' + 9 e_2'^{2} - 144 \right \rangle\cap\left \langle 9 e_1'^{2} + 2 e_1' e_2' + 9 e_2'^{2} - 144 \right \rangle\cap  \\
    &\left \langle  3 e_1'^{2} - 4 e_1' e_3' + 3 e_3'^{2} - 27\right \rangle\cap \left \langle  3 e_1'^{2} + 4 e_1' e_3' + 3 e_3'^{2} - 27\right \rangle\cap\\
    &\left \langle3 e_2'^{2} - 4 e_2' e_3' + 3 e_3'^{2} - 27\right \rangle\cap\left \langle3 e_2'^{2} + 4 e_2' e_3' + 3 e_3'^{2} - 27\right \rangle \cap \\
    &\left \langle  9 e_1'^{4}- 2 e_1'^{2} e_2'^{2}- 16 e_1'^{2} e_3'^{2}+ 9 e_2'^{4}- 16 e_2'^{2} e_3'^{2}+ 16 e_3'^{4}- 1296\right \rangle \cap\\
    &\left \langle 1280 e_1'^{8} e_2'^{8}-2560 e_1'^{8} e_2'^{6} e_3'^{2}+2720 e_1'^{8} e_2'^{4} e_3'^{4}-1440 e_1'^{8} e_2'^{2} e_3'^{6}\right.  \\
    &\left.+405 e_1'^{8} e_3'^{8}+\cdots\text{95 terms} \cdots -3265173504e_3'^{4}+15305500800e_1'^{2}\right. \\
    &\left. +15305500800e_2'^{2}+19591041024e_3'^{2}-44079842304\right \rangle.
\end{aligned}
\nonumber
\end{equation}

Remove the degenerate cases, it follows that:
\begin{equation}
    \begin{aligned}
    \mathcal{V}^\prime_{\ge2}=&\V(\left \langle C \right \rangle)\\
    =&\V(\left \langle 1280 e_1'^{8} e_2'^{8}-2560 e_1'^{8} e_2'^{6} e_3'^{2}+2720 e_1'^{8} e_2'^{4} e_3'^{4}-1440 e_1'^{8} e_2'^{2} e_3'^{6}\right.  \\
    &\left.+405 e_1'^{8} e_3'^{8}+\cdots\text{95 terms} \cdots -3265173504e_3'^{4}+15305500800e_1'^{2}\right. \\
    &\left. +15305500800e_2'^{2}+19591041024e_3'^{2}-44079842304\right \rangle ) .
\end{aligned}
\nonumber
\end{equation}

\subsection{Isosceles Obtuse Triangle}
Let $(s_{12},s_{13},s_{23})=(5,3,3)$ define the case $\triangle ABC$ of isosceles obtuse triangle, then $\K^\prime_2$ is computed as
\begin{equation}
    \begin{aligned}
    \K^\prime_2=&\left \langle 3653656875e_1'^{25} e_2'^{13} e_3'-10149046875e_1'^{25} e_2'^{11} e_3'^{3}+15621412950e_1'^{25}e_2'^{9} e_3'^{5} \right.  \\
    &\left.-14825727675e_1'^{25} e_2'^{7} e_3'^{7}+\cdots\text{1156 terms}\cdots+8803851606e_1'^{25} e_2'^{5} e_3'^{9} \right. \\
    &\left.-74204501092876762701416015625e_1'e_2'e_3' \right \rangle.
\end{aligned}
\nonumber
\end{equation}
The prime decomposition of its radical ideal is computed as
\begin{equation}
    \begin{aligned}
     \sqrt{\K^\prime_2}=&\left \langle e_1^\prime \right \rangle\cap\left \langle e_2^\prime \right \rangle\cap\left \langle e_3^\prime \right \rangle\cap\\
     &\left \langle 9 e_1'^{2} - 7 e_1' e_2' + 9 e_2'^{2} - 225 \right \rangle \cap\left \langle 9 e_1'^{2} + 7 e_1' e_2' + 9 e_2'^{2} - 225 \right \rangle \cap\\
    &\left \langle 3 e_1'^{2} - 5 e_1' e_3' + 3 e_3'^{2} - 27 \right \rangle \cap \left \langle 3 e_1'^{2} + 5 e_1' e_3' + 3 e_3'^{2} - 27 \right \rangle\cap \\
    &\left \langle  3 e_2'^{2} - 5 e_2' e_3' + 3 e_3'^{2} - 27\right \rangle \cap \left \langle  3 e_2'^{2} + 5 e_2' e_3' + 3 e_3'^{2} - 27\right \rangle \cap \\
    &\left \langle9 e_1'^{4}+ 7 e_1'^{2} e_2'^{2}- 25 e_1'^{2} e_3'^{2}+ 9 e_2'^{4}- 25 e_2'^{2} e_3'^{2}+ 25 e_3'^{4}- 2025\right \rangle\cap  \\
    &\left \langle 6875e_1'^{8}e_2'^{8}-13750e_1'^{8}e_2'^{6}e_3'^{2}+11825e_1'^{8}e_2'^{4}e_3'^{4}-4950e_1'^{8}e_2'^{2}e_3'^{6}\right.  \\
    &\left.+891e_1'^{8}e_3'^{8} +\cdots\text{97 terms}\cdots+203276182500e_2'^{2}+298935562500e_3'^{2} \right. \\
    &\left. -672605015625\right \rangle.
\end{aligned}
\nonumber
\end{equation}

Remove the degenerate cases, it follows that:
\begin{equation}
    \begin{aligned}
    \mathcal{V}^\prime_{\ge2}=&\V(\left \langle C \right \rangle)\\
    =&\V(\left \langle 6875e_1'^{8}e_2'^{8}-13750e_1'^{8}e_2'^{6}e_3'^{2}+11825e_1'^{8}e_2'^{4}e_3'^{4}-4950e_1'^{8}e_2'^{2}e_3'^{6}\right. \\
    &\left. +891e_1'^{8}e_3'^{8}+\cdots\text{97 terms}\cdots +203276182500e_2'^{2}+298935562500e_3'^{2} \right.\\
    &\left. -672605015625\right \rangle).
\end{aligned}
\nonumber
\end{equation}

\subsection{General Right Triangle}
Let $(s_{12},s_{13},s_{23})=(5,4,3)$ define the case $\triangle ABC$ of general right triangle, then $\K^\prime_2$ is computed as
\begin{equation}
    \begin{aligned}
    \K^\prime_2=&\left \langle 31640625e_1'^{23}e_2'^{13}e_3'-98718750e_1'^{23}e_2'^{11}e_3'^{3}+169340625e_1'^{23}e_2'^{9}e_3'^{5}\right.  \\
    &\left.-184550400e_1'^{23}e_2'^{7}e_3'^{7}+\cdots\text{987 terms}\cdots+134330400e_1'^{23}e_2'^{5}e_3'^{9} \right.  \\
    &\left.
+1224440064000000000000000000e_1'e_2'e_3'\right \rangle.
\end{aligned}
\nonumber
\end{equation}
The prime decomposition of its radical ideal is computed as
\begin{equation}
    \begin{aligned}
    \sqrt{\K^\prime_2}=&\left \langle e_1^\prime \right \rangle\cap\left \langle e_2^\prime \right \rangle\cap\left \langle e_3^\prime \right \rangle\cap\\
    &\left \langle e_1'^{2} + e_2'^{2} - 25 \right \rangle\cap \\
    &\left \langle 5 e_1'^{2} - 6 e_1' e_3' + 5 e_3'^{2} - 80 \right \rangle \cap\left \langle 5 e_1'^{2} + 6 e_1' e_3' + 5 e_3'^{2} - 80 \right \rangle\cap  \\
    &\left \langle 5 e_2'^{2} - 8 e_2' e_3' + 5 e_3'^{2} - 45 \right \rangle\cap\left \langle 5 e_2'^{2} + 8 e_2' e_3' + 5 e_3'^{2} - 45 \right \rangle \cap \\
    &\left \langle 9 e_1'^{4}- 18 e_1'^{2} e_3'^{2}+ 16 e_2'^{4}- 32 e_2'^{2} e_3'^{2}+ 25 e_3'^{4}- 3600 \right \rangle \cap \\
    &\left \langle 625e_1'^{8}e_2'^{8}-1600e_1'^{8}e_2'^{6}e_3'^{2}+1824e_1'^{8}e_2'^{4}e_3'^{4}-1024e_1'^{8}e_2'^{2}e_3'^{6}\right.\\
    &\left.+256e_1'^{8}e_3'^{8}+\cdots\text{97 terms}\cdots +59778000000e_1'^{2}+88128000000e_2'^{2}\right.\\
    &\left.+101250000000e_3'^{2}-291600000000\right \rangle.
\end{aligned}
\nonumber
\end{equation}

Remove the degenerate cases, it follows that:
\begin{equation}
    \begin{aligned}
    \mathcal{V}^\prime_{\ge2}=&\V(\left \langle C \right \rangle)\\
    =&\V(\left \langle 625e_1'^{8}e_2'^{8}-1600e_1'^{8}e_2'^{6}e_3'^{2}+1824e_1'^{8}e_2'^{4}e_3'^{4}-1024e_1'^{8}e_2'^{2}e_3'^{6}\right. \\
    &\left.+256e_1'^{8}e_3'^{8}+\cdots\text{97 terms}\cdots +59778000000e_1'^{2}+88128000000e_2'^{2}\right.\\
    &\left.+101250000000e_3'^{2}-291600000000\right \rangle).
\end{aligned}
\nonumber
\end{equation}

\subsection{General Obtuse Triangle}
Let $(s_{12},s_{13},s_{23})=(7,5,3)$ define the case $\triangle ABC$ of general obtuse triangle, then $\K^\prime_2$ is computed as
\begin{equation}
    \begin{aligned}
   \K^\prime_2=&\left \langle 1400846643e_1'^{25}e_2'^{13}e_3'-5746330107e_1'^{25}e_2'^{11}e_3'^{3}+10680507558e_1'^{25}e_2'^{9}e_3'^{5}\right. \\
    &\left.-11255746635e_1'^{25}e_2'^{7}e_3'^{7}+\cdots\text{1150 terms}\cdots+7006674150e_1'^{25}e_2'^{5}e_3'^{9} \right. \\
    &\left.+1407974590798782676168939453125000\,e_1'e_2'e_3'^{3}
\right. \\
&\left.-2329369727424456633367730712890625\,e_1'e_2'e_3'\right \rangle .
\end{aligned}
\nonumber
\end{equation}
The prime decomposition of its radical ideal is computed as
\begin{equation}
    \begin{aligned}
    \sqrt{\K^\prime_2}=&\left \langle e_1^\prime \right \rangle\cap\left \langle e_2^\prime \right \rangle\cap\left \langle e_3^\prime \right \rangle\cap\\
    &\left \langle e_1'^{2} - e_1' e_2' + e_2'^{2} - 49 \right \rangle\cap\left \langle e_1'^{2} + e_1' e_2' + e_2'^{2} - 49 \right \rangle\cap \\
    &\left \langle  7 e_1'^{2} - 11 e_1' e_3' + 7 e_3'^{2} - 175\right \rangle\cap\left \langle  7 e_1'^{2} + 11 e_1' e_3' + 7 e_3'^{2} - 175\right \rangle\cap \\
    &\left \langle 7 e_2'^{2} - 13 e_2' e_3' + 7 e_3'^{2} - 63 \right \rangle\cap\left \langle 7 e_2'^{2} + 13 e_2' e_3' + 7 e_3'^{2} - 63 \right \rangle\cap\\
    &\left \langle 9 e_1'^{4}+ 15 e_1'^{2} e_2'^{2}- 33 e_1'^{2} e_3'^{2}+ 25 e_2'^{4}- 65 e_2'^{2} e_3'^{2}+ 49 e_3'^{4}- 11025 \right \rangle\cap \\
    &\left \langle 7203 e_1'^{8} e_2'^{8}-19110 e_1'^{8} e_2'^{6} e_3'^{2}+20025 e_1'^{8} e_2'^{4} e_3'^{4}-9750 e_1'^{8} e_2'^{2} e_3'^{6}\right. \\
    &\left.+1875 e_1'^{8} e_3'^{8} +\cdots \text{99 terms}\cdots+19845327442500e_3'^{2}-65664686390625\right \rangle.
\end{aligned}
\nonumber
\end{equation}

Remove the degenerate cases, it follows that:
\begin{equation}
    \begin{aligned}
     \mathcal{V}^\prime_{\ge2}=&\V(\left \langle C \right \rangle)\\
     =&\V(\left \langle 7203 e_1'^{8} e_2'^{8}-19110 e_1'^{8} e_2'^{6} e_3'^{2}+20025 e_1'^{8} e_2'^{4} e_3'^{4}-9750 e_1'^{8} e_2'^{2} e_3'^{6}\right. \\
    &\left. +1875 e_1'^{8} e_3'^{8}+\cdots \text{99 terms}\cdots+19845327442500e_3'^{2}\right. \\
    &\left.-65664686390625\right \rangle) .
\end{aligned}
\nonumber
\end{equation}

    \begin{figure}[H]
\centering
\subfigure[Equilateral triangle]{
\includegraphics[scale=0.25]{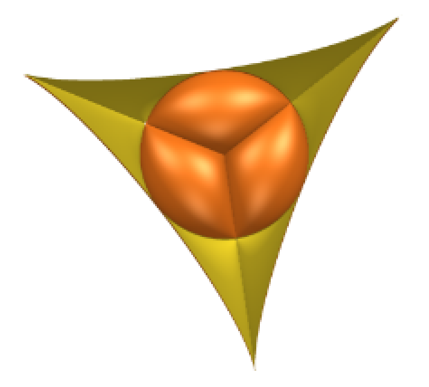}
}
\subfigure[Isosceles right triangle]{
\includegraphics[scale=0.25]{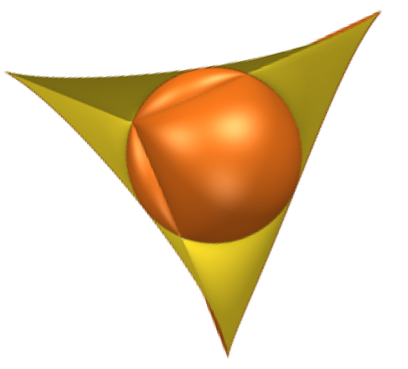}
}
\subfigure[Isosceles acute triangle]{
\includegraphics[scale=0.25]{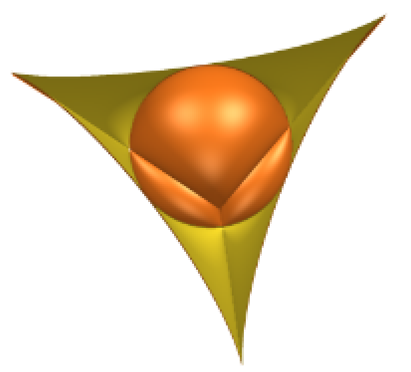}
}
\end{figure}

\begin{figure}[H]
\centering
\subfigure[Isosceles obtuse triangle]{
\includegraphics[scale=0.25]{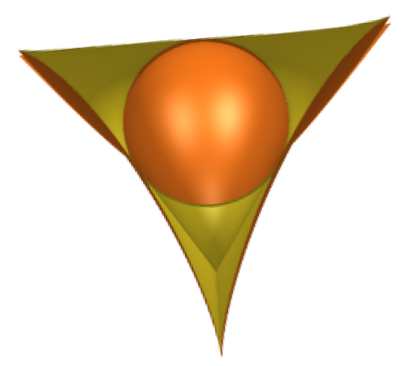}
}
\subfigure[Right triangle]{
\includegraphics[scale=0.25]{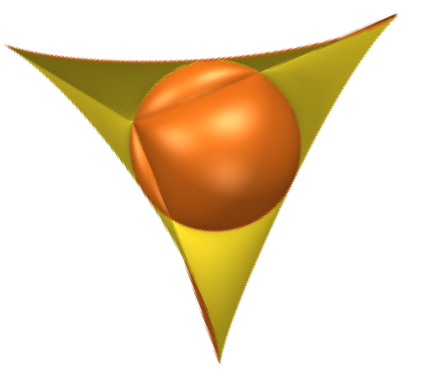}
}
\subfigure[Obtuse triangle]{
\includegraphics[scale=0.25]{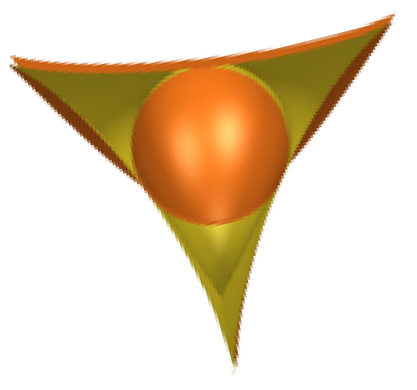}
}
\caption{The geometry of $\mathcal{V}'_{\geq2}$ for different types of triangles.}
\end{figure}






\end{document}